\documentclass{article}
\newcommand{\MedicalHarnessArxiv}{}
\ifdefined\MedicalHarnessArxiv
\else
  \documentclass{article}
\fi

\PassOptionsToPackage{table}{xcolor}
\usepackage[T1]{fontenc}
\ifdefined\MedicalHarnessArxiv
  \usepackage{arxiv,times}
\else
  \usepackage{iclr2027_conference,times}
\fi

\usepackage{dsfont} 

\usepackage{amsmath,amsfonts,bm}

\def\eqref#1{equation~\ref{#1}}

\def\1{\bm{1}}

\DeclareMathAlphabet{\mathsfit}{\encodingdefault}{\sfdefault}{m}{sl}
\SetMathAlphabet{\mathsfit}{bold}{\encodingdefault}{\sfdefault}{bx}{n}

\usepackage{amsmath}
\usepackage{amssymb}
\usepackage{amsfonts}
\usepackage{amsthm}
\usepackage{booktabs}
\usepackage{multirow}
\usepackage{makecell}
\usepackage{tabularx}
\usepackage{graphicx}
\usepackage{subcaption}
\usepackage{algorithm}
\usepackage{algpseudocode}
\usepackage[table]{xcolor}
\usepackage{xspace}
\usepackage{pifont}
\usepackage{enumitem}
\usepackage{microtype}
\usepackage{url}
\usepackage{hyperref}
\ifdefined\MedicalHarnessArxiv\usepackage{fontawesome5}\fi
\usepackage{cleveref}
\usepackage{wrapfig}
\usepackage{placeins}

\usepackage[most]{tcolorbox}

\newtheoremstyle{mhresult}{0pt}{0pt}{\itshape}{0pt}{\normalfont\bfseries}{.}{5pt plus 1pt minus 1pt}
  {\thmname{#1}\thmnumber{ #2}\thmnote{ {\mdseries(#3)}}}
\newtheoremstyle{mhdef}{0pt}{0pt}{\normalfont}{0pt}{\normalfont\bfseries}{.}{5pt plus 1pt minus 1pt}
  {\thmname{#1}\thmnumber{ #2}\thmnote{ {\mdseries(#3)}}}
\theoremstyle{plain}

\theoremstyle{mhresult}

\newtheorem{proposition}{Proposition}[section]

\theoremstyle{mhdef}
\newtheorem{definition}{Definition}[section]

\tcbset{resultbox/.style={enhanced, breakable, sharp corners, frame hidden, boxrule=0pt, colback=black!4,
  boxsep=0pt, left=8pt, right=8pt, top=6pt, bottom=6pt, before skip=9pt, after skip=7pt}}
\tcolorboxenvironment{theorem}{resultbox, unbreakable}
\tcolorboxenvironment{proposition}{resultbox, unbreakable}
\tcolorboxenvironment{definition}{resultbox, unbreakable}
\tcolorboxenvironment{corollary}{resultbox, unbreakable, before skip=7pt, after skip=6pt}
\newcommand{\proofnote}[1]{\par\noindent\textit{Proof.} See #1.\par}

\definecolor{lightgray}{gray}{0.9}
\definecolor{lightergray}{gray}{0.95}

\definecolor{mhName}{HTML}{9B2C4E}
\definecolor{mhLabName}{HTML}{2F6DB5}
\newcommand{\mhname}[2]{\textcolor{#1}{{\fontfamily{lmtt}\fontshape{n}\selectfont\bfseries #2}}}
\newcommand{\methodname}{\mhname{mhName}{MedicalHarness}\xspace}
\newcommand{\benchsuite}{\mhname{mhName}{MedicalHarnessBench}\xspace}

\newcommand{\fone}{\ding{182}}
\newcommand{\ftwo}{\ding{183}}
\newcommand{\fthree}{\ding{184}}
\newcommand{\ffour}{\ding{185}}

\newcommand{\ctool}{\mbox{$\mathop{\mathtt{MedTool}}\limits$}\xspace}

\newcommand{\cmem}{\mbox{$\mathop{\mathtt{MedMemory}}\limits$}\xspace}

\newcommand{\cplan}{\mbox{$\mathop{\mathtt{MedPlanning}}\limits$}\xspace}

\newcommand{\cweb}{\mbox{$\mathop{\mathtt{MedWeb}}\limits$}\xspace}

\newcommand{\mqwen}{\mbox{$\mathop{\mathtt{Qwen3.6\text{-}35B\text{-}A3B}}\limits$}\xspace}
\newcommand{\mlightning}{\mbox{$\mathop{\mathtt{Nemotron\text{-}3.5\text{-}30B\text{-}A3B}}\limits$}\xspace}

\newcommand{\mgptoss}{\mbox{$\mathop{\mathtt{GPT\text{-}OSS\text{-}120B}}\limits$}\xspace}
\newcommand{\mgemma}{\mbox{$\mathop{\mathtt{Gemma4\text{-}26B\text{-}A4B}}\limits$}\xspace}
\newcommand{\lgharness}{\mhname{mhLabName}{MH-Lab}\xspace}   
\newcommand{\mglm}{\mbox{$\mathop{\mathtt{GLM\text{-}4.7\text{-}Flash}}\limits$}\xspace}

\definecolor{mhBlock}{HTML}{DDEBF7}   
\definecolor{mhOurs}{HTML}{F8E1E1}    
\definecolor{mhNavy}{HTML}{24527A}    
\definecolor{mhFaint}{HTML}{707070}   
\definecolor{mhRule}{HTML}{58606A}
\newcolumntype{I}{!{\color{mhRule}\vrule width 0.3pt}}
\newcolumntype{Y}{>{\centering\arraybackslash}X}

\newcommand{\meanstd}[2]{#1{\scriptsize\color{mhFaint}$\pm$#2}}

\newcommand{\firstresult}[1]{\textbf{#1}}
\newcommand{\secondresult}[1]{\underline{#1}}
\newcommand{\baseline}[1]{\texttt{#1}}

\title{\methodname: A Controlled Evaluation of LLMs and Agent Harnesses on Medical Tasks}
\ifdefined\MedicalHarnessArxiv
  \author{Ziqing Wang \quad Lili Zhao \quad Kaize Ding\\Northwestern University\\[5pt]
    \textnormal{\small\faGlobe\ \href{https://real-lab-nu.github.io/MedicalHarness/}{\textcolor{mhName}{\textbf{Project Page}}}\quad\faGithub\ \href{https://github.com/REAL-Lab-NU/MedicalHarness}{\textcolor{mhName}{\textbf{Code \& Data}}}}}
  \hypersetup{colorlinks=true, linkcolor=mhName, citecolor=mhLabName, urlcolor=mhName}
  \iclrfinalcopy
  \newcommand{\repositorylink}{\url{https://github.com/REAL-Lab-NU/MedicalHarness}}
  \hypersetup{
    pdftitle={MedicalHarness: A Controlled Evaluation of LLMs and Agent Harnesses on Medical Tasks},
    pdfauthor={Ziqing Wang, Lili Zhao, Kaize Ding}
  }
\else
  \author{Anonymous authors}
  \newcommand{\repositorylink}{\url{https://anonymous.4open.science/r/MedicalHarness/}}
\fi

\begin{document}

\maketitle
\suppressfloats[t]

\begin{abstract}
LLM agents are increasingly built for medical work and scored on clinical benchmarks. Each such score,
however, comes from a model running inside an agent harness, the system that controls the loop between the
model and its environment. An agent's score is therefore a property of a model--harness pair. For medical
agents, how much outcomes change with the harness has rarely been measured. Measuring this change, and
explaining it, raises two challenges. First, a harness comparison must change nothing but the harness and be
repeated across models and kinds of task. Second, comparing whole harnesses leaves their mechanisms bundled
together, so it cannot show when an individual mechanism helps. To address these challenges, we present
\methodname, a controlled study of models and agent harnesses on medical tasks. We first build \benchsuite
to evaluate agents on $107$ tasks across four domains that each test a different harness capability. Using
this benchmark, we run five open-weight models under five agent harnesses, changing only the harness within
a comparison, and analyze both outcomes and execution traces. To study individual mechanisms, we build
\lgharness, a controlled harness that switches off context management, planning or tool exposure one at a time
within a shared execution loop. We find that the harness and its interaction with the model account for
about a quarter of the outcome variance, and that no single harness is best across models and tasks.
Code and data are available at
\repositorylink.
\end{abstract}

\begin{figure}[t]
  \centering
  \includegraphics[width=\textwidth]{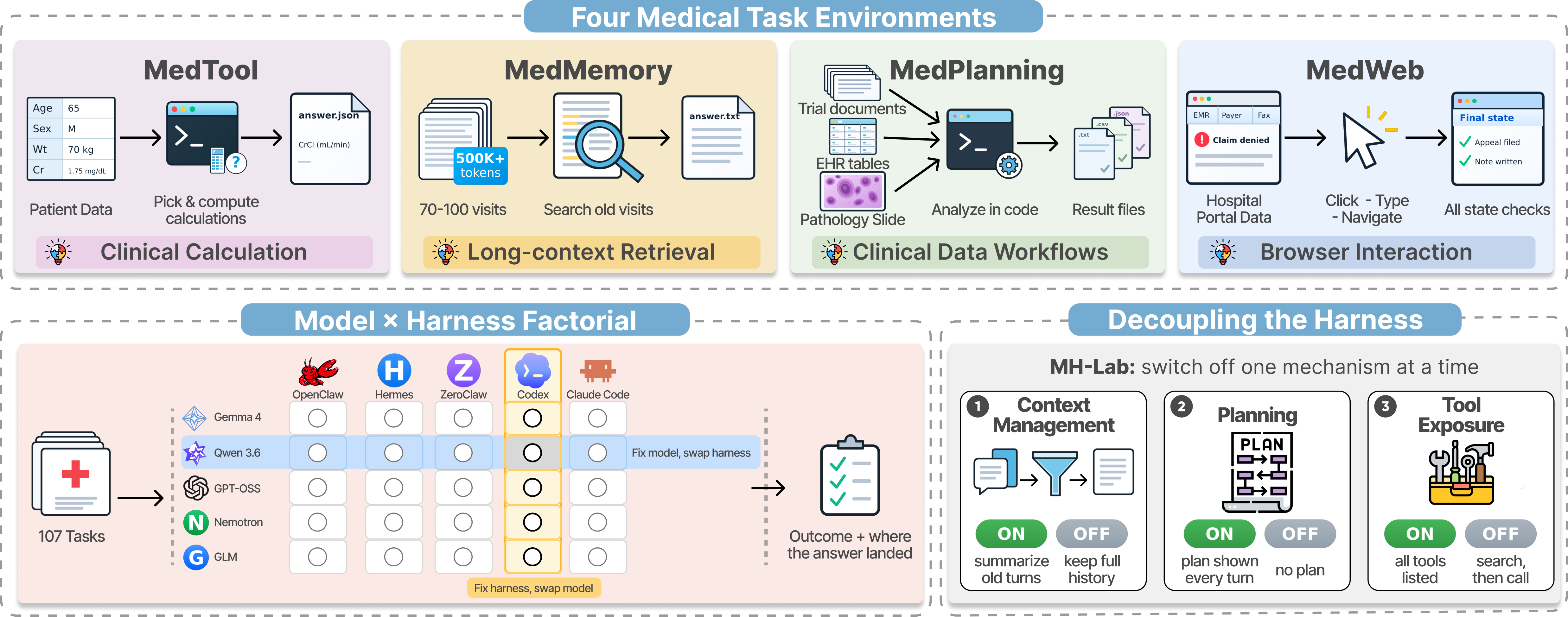}
  \captionsetup{skip=3pt}
  \caption{\small\textbf{Overview of \methodname.} \textbf{Top:} \benchsuite's four environments, each
  testing a different harness capability: \ctool (clinical calculation), \cmem (long-context
  retrieval), \cplan (clinical data workflows) and \cweb (browser interaction). \textbf{Bottom left:}
  every model runs under every harness on all $107$ tasks, so each row varies only the harness and
  each column only the model. \textbf{Bottom right:} \lgharness switches off one mechanism at a time, keeping the rest
  of its loop fixed: context management (summarizing old turns), planning (a plan shown every turn)
  or tool exposure (all tools listed instead of hidden behind search).}
  \label{fig:overview}
  \vspace{-8pt}
\end{figure}

\section{Introduction}
\label{sec:intro}

LLM agents have drawn growing interest in medicine, with the promise of supporting clinical work
and reducing clinicians' administrative burden~\citep{wang2025baymax,jiang2025medagentbench,wang2025amanda,wang2026latentdx}.
These applications require agents to act within clinical systems, from computing risk scores and
finding facts in patient records to checking trial eligibility and resolving insurance denials
across hospital portals. To assess how well agents complete such work, a growing set of benchmarks
evaluates them in simulated clinical
environments~\citep{jiang2025medagentbench,schmidgall2025agentclinic,liu2026healthagentbench,bedi2026healthadminbench,chen2026chibench,liu2026benchmarking}.
The resulting scores reflect both the model
and the agent system that guides its interactions with the environment. Understanding these results
therefore raises a basic question: \emph{how much do outcomes change when the same model runs
inside a different agent system?}

The system around the model is its agent harness. Harnesses such as Claude Code~\citep{claudecode},
Codex~\citep{codexcli}, OpenClaw~\citep{openclaw} and Hermes~\citep{hermesagent} determine what the
model sees, which tools it can call, how its actions are executed, how its history is managed and
when execution ends~\citep{li2026harness,wang2026openclawsurvey}. History management affects which clinical evidence
remains available~\citep{wang2026medmemorybench,zhang2026medlocomo}, while stopping rules determine whether an unfinished workflow can continue. Such
differences can lead the same model to different
outcomes~\citep{zhang2026disclosingharness,yao2026harnessbench}. An agent's score is therefore a
property of a model--harness pair. In medicine, however, few studies compare harnesses under
controlled conditions across models and kinds of task. Making such comparisons, and
explaining the differences they reveal, raises two challenges.

\textbf{Challenge 1: measuring and understanding harness differences.} When agent comparisons
change the model, serving conditions or execution budget along with the harness, the resulting
score gaps mix these effects~\citep{zhang2026disclosingharness,yao2026harnessbench}. A controlled
comparison must therefore run the same model on the same tasks, with the serving setup and budgets
held fixed. A result from one model on one task type may not hold
elsewhere, so this comparison must be repeated across models and different kinds of medical
task. Even under these controls, the same low score can reflect different failures, such as a
wrong answer, no answer at all or an unfinished workflow, so execution traces and task diagnostics
are needed to tell them apart.

\textbf{Challenge 2: understanding when individual mechanisms help.} Whole-harness comparisons
leave context management, planning and tool exposure bundled together, making their individual
effects difficult to identify~\citep{li2026harness}. Whether a mechanism helps can also depend on
the model and the task. Some mechanisms, such as summarization, operate only under particular
conditions, so enabling them does not ensure that they are used. Studying their effects therefore
requires changing one mechanism at a time within a shared execution loop, with the model, task,
underlying tools and budget held fixed, and in a setting where the mechanism actually fires.

To address these challenges, we present \methodname, a controlled study of models and agent
harnesses on medical tasks (\Cref{fig:overview}). We first build \benchsuite to evaluate agents
across four capability domains. Using this benchmark, we compare model--harness pairs under
matched execution conditions and analyze their outcomes and execution traces. To study individual
mechanisms, we build \lgharness, which switches off context management, planning or tool exposure
one at a time within a shared execution loop. We find that the harness and its interaction with the
model account for a substantial share of outcome variance, and that no single harness is best
across models and tasks. Harnesses also usually differ more in how often they answer than in their scores on commonly
answered questions, and hiding tools behind a search interface, meant to save context, can instead
cost tokens or accuracy. Our contributions are:
\begin{itemize}[leftmargin=*,noitemsep,topsep=2pt]
  \item \textbf{Benchmark.} We introduce \benchsuite, a benchmark of $107$ executable medical tasks
  spanning clinical calculation, long-context retrieval, clinical data workflows and browser
  interaction, four domains that each test a different harness capability.
  \item \textbf{Controlled comparison study.} We evaluate five open-weight models under five
  harnesses with three seeds ($8{,}025$ episodes), changing only the harness within a comparison.
  We pair a model--harness variance analysis with answer-production and execution diagnostics.
  \item \textbf{Controlled mechanism study.} We build \lgharness to test context management,
  planning and tool exposure through one-at-a-time interventions within a shared execution loop.
\end{itemize}

\section{Related Work}
\label{sec:related_work}

\textbf{Medical agent benchmarks.}
MedAgentBench evaluates task execution in a virtual EHR~\citep{jiang2025medagentbench}, and
AgentClinic simulates interactive clinical decision-making~\citep{schmidgall2025agentclinic}.
HealthAgentBench assembles executable environments across clinical data, imaging and trial
matching~\citep{liu2026healthagentbench}, HealthAdminBench and $\chi$-Bench evaluate agents on
healthcare administration and managed-care workflows~\citep{bedi2026healthadminbench,chen2026chibench},
MedMCP-Calc exposes medical calculators to agents through MCP~\citep{zhu2026medmcpcalc}, and
MedMemoryBench tests memory over long patient histories~\citep{wang2026medmemorybench}. These
benchmarks measure what an agent can do in a medical environment. HealthAgentBench and $\chi$-Bench also
compare harnesses, but \methodname is dedicated to this comparison and makes it more comprehensive
and better controlled. It runs every model under every harness on more tasks, spanning four kinds
that each test a different harness capability, serves all models locally under one setup so that only
the harness changes, and adds controlled experiments on individual mechanisms.

\textbf{Harnesses and their mechanisms.}
A harness, also called a scaffold, changes what a fixed model can
do~\citep{yang2024sweagent,wang2025openhands}, yet general agent benchmarks such as AgentBench and
WebArena are built mainly to compare models~\citep{liu2024agentbench,zhou2024webarena}. Recent work
evaluates the harness itself. Harness-Bench measures harness effects across
models~\citep{yao2026harnessbench}, \citet{zhang2026disclosingharness} model the harness as the
controller of a closed loop around the model and propose disclosure standards with a variance
decomposition, \citet{li2026harness} organize harness components into seven families, and
\citet{fan2026harness} ablate components for coding agents. Controlled comparisons outside medicine
find that harness effects vary with the task and the model, from large cost but small accuracy
differences on terminal coding~\citep{vats2026scaffold} to accuracy gaps of up to $28$ points on
GAIA~\citep{starace2026gaia}, and \citet{zhangy2026confound} show that a score reflects the model only
if the harness leaves the decisions to it. \methodname also separates how often agents answer from their scores on commonly answered questions, and uses
\lgharness to switch off three mechanisms that product harnesses share, context
management~\citep{packer2023memgpt}, planning and tool exposure~\citep{qin2024toolllm}, one at a time
within one execution loop.

\section{Problem Formulation}
\label{sec:problem_definition}

We first define what a harness controls and what we hold fixed when comparing harnesses
(\Cref{sec:formal-harness}). We then separate variation in benchmark scores into model, harness,
interaction and repeated-run components (\Cref{sec:formal-variance}).

\subsection{Agent harness}
\label{sec:formal-harness}

A harness decides what the model sees, how its actions are executed, and when the episode ends.

\begin{definition}[Agent harness]
\label{def:agent}
An agent harness is a triple of procedures $h = (C_h, U_h, T_h)$ that runs a model $m$ on a task $x$.
From the task's initial state $s_0$, while $T_h(s_t) = 0$ it repeats
\[
c_t = C_h(s_t), \qquad a_t \sim \pi_m(\cdot \mid c_t), \qquad
(o_t, s_{t+1}) \sim U_h(\cdot \mid s_t, a_t; x).
\]
Here $\pi_m(\cdot \mid c_t)$ is the model's response distribution.
$C_h$ builds the context, $U_h$ executes the action in the task environment, collects the observation
$o_t$ and updates the state, and $T_h$ decides when to stop. The state contains the history, files,
plan, environment state and remaining budget. An agent is the pair $(m, h)$, and the task's oracle
scores its trajectory $\tau$ as $Y = R_x(\tau) \in [0,1]$.
\end{definition}

$C_h$ selects the instructions, tool descriptions and history shown to the model.
State updates in $U_h$ include tool routing, output checks and any auxiliary model
calls, such as summarization, that prepare the next turn. In our study these calls use the same
model $\pi_m$, are recorded in the trajectory and count toward the episode's budget.

This view guides both comparisons. When swapping harnesses, we hold the tasks and environments,
model checkpoint, serving stack, context window, episode deadline and run seed set fixed.
Each harness runs as released, so its prompts, tool formats, retry and stopping rules, output limits and
sampling fields belong to the harness being compared.
When removing a mechanism from \lgharness, we change the procedures that
implement it and keep the rest fixed (\Cref{sec:components}). Context management and planning change
what $C_h$ builds and $U_h$ keeps. Tool exposure changes both the tool descriptions in $C_h$ and
the routing and feedback in $U_h$.

\subsection{Model--harness decomposition}
\label{sec:formal-variance}

To measure how much the harness matters, we split variation in benchmark scores into four parts:
the model, the harness, their interaction, and variation across repeated runs. The interaction
captures whether a harness suits some models better than others. Repeated runs of the same
model--harness pair can also produce different scores. We report this variation separately.

\begin{definition}[Model--harness decomposition]
\label{def:decomp}
Within an environment, let every model $m \in \mathcal{M}$ run under every harness $h \in \mathcal{H}$
with every seed $\sigma \in \mathcal{S}$, and let $\bar y_{mh\sigma}$ be the mean outcome over the
environment's tasks. We decompose
\begin{equation}
\bar y_{mh\sigma} = \mu + \alpha_m + \beta_h + \gamma_{mh} + \varepsilon_{mh\sigma}.
\label{eq:anova}
\end{equation}
Here $\mu$ is the grand mean. The model effect $\alpha_m$ is the model's mean score across harnesses
and seeds minus $\mu$. The harness effect $\beta_h$ is the harness's mean across models and seeds
minus $\mu$.
The interaction $\gamma_{mh}$ is the pair's mean score minus $(\mu+\alpha_m+\beta_h)$, and
$\varepsilon_{mh\sigma}$ is a run's deviation from its pair's mean.
\end{definition}

The total score variation is $\mathrm{SS}_T = \sum_{m,h,\sigma}(\bar y_{mh\sigma}-\mu)^2$.
Summing each component's squared values over all $(m,h,\sigma)$ gives $\mathrm{SS}_M$,
$\mathrm{SS}_H$, $\mathrm{SS}_{MH}$ and $\mathrm{SS}_E$, respectively.
The cross terms cancel in this balanced design, giving
\begin{equation}
\mathrm{SS}_T = \mathrm{SS}_M + \mathrm{SS}_H + \mathrm{SS}_{MH} + \mathrm{SS}_E,
\label{eq:ss}
\end{equation}
with the expanded formulas in \Cref{sec:appendix-variance}.

When $\mathrm{SS}_T > 0$, we report each component as a share of the total score variation.
The \emph{harness share}, $(\mathrm{SS}_H + \mathrm{SS}_{MH})/\mathrm{SS}_T$, combines the harness
effect and the model--harness interaction. The share $\mathrm{SS}_E/\mathrm{SS}_T$ measures
run-to-run variation.

We also ask whether scores vary more when changing the harness under a fixed model or changing
the model under a fixed harness. We measure these as harness variance (HV) and model variance
(MV)~\citep{zhang2026disclosingharness}:
$\mathrm{HV}(m) = \operatorname{Var}_h(\bar y_{mh\cdot})$ and
$\mathrm{MV}(h) = \operatorname{Var}_m(\bar y_{mh\cdot})$, where the dot averages over seeds.
We use population variances over the selected harnesses or models, dividing by $|\mathcal{H}|$ or
$|\mathcal{M}|$, respectively.

\begin{proposition}[Harness and model variance]
\label{prop:hvmv}
In a balanced model $\times$ harness $\times$ seed design, let $N = |\mathcal{M}||\mathcal{H}||\mathcal{S}|$.
Averaging $\mathrm{HV}$ over models and $\mathrm{MV}$ over harnesses gives
\[
\overline{\mathrm{HV}} = \frac{\mathrm{SS}_H + \mathrm{SS}_{MH}}{N},
\qquad
\overline{\mathrm{MV}} = \frac{\mathrm{SS}_M + \mathrm{SS}_{MH}}{N}.
\]
Consequently, $\overline{\mathrm{HV}} > \overline{\mathrm{MV}}$ if and only if
$\mathrm{SS}_H > \mathrm{SS}_M$.
\end{proposition}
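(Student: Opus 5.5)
The plan is to express each pair mean $\bar y_{mh\cdot}$ through the decomposition in Definition~\ref{def:decomp} and then compute the two variances directly. Averaging \eqref{eq:anova} over seeds gives $\bar y_{mh\cdot} = \mu + \alpha_m + \beta_h + \gamma_{mh}$, because the residuals satisfy $\sum_\sigma \varepsilon_{mh\sigma}=0$; this holds since $\varepsilon$ is by definition the deviation from the pair mean. I will also use the standard zero-sum constraints of the balanced design, which follow directly from the definitions of the effects as deviations of marginal means: $\sum_m \alpha_m = 0$, $\sum_h \beta_h = 0$, $\sum_h \gamma_{mh}=0$ for each $m$, and $\sum_m \gamma_{mh}=0$ for each $h$.

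\textbf{Harness variance.} Fix a model $m$. The mean over $h$ of $\bar y_{mh\cdot}$ is $\mu+\alpha_m$, since the $\beta_h$ and $\gamma_{mh}$ terms average to zero over $h$. Hence $\mathrm{HV}(m) = \frac{1}{|\mathcal H|}\sum_h (\beta_h+\gamma_{mh})^2$. Expanding the square, the cross term is $2\beta_h\gamma_{mh}$. Averaging over $m$ kills it, because $\sum_m \gamma_{mh}=0$ for each fixed $h$. This gives
\[
\overline{\mathrm{HV}} = \frac{1}{|\mathcal M||\mathcal H|}\Bigl(\sum_{m,h}\beta_h^2 + \sum_{m,h}\gamma_{mh}^2\Bigr).
\]
Since $\mathrm{SS}_H=\sum_{m,h,\sigma}\beta_h^2 = |\mathcal S|\sum_{m,h}\beta_h^2$, and likewise for $\mathrm{SS}_{MH}$, multiplying numerator and denominator by $|\mathcal S|$ yields $(\mathrm{SS}_H+\mathrm{SS}_{MH})/N$.

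\textbf{Model variance.} The computation is symmetric with the roles of $m$ and $h$ exchanged. For fixed $h$, the deviations are $\alpha_m+\gamma_{mh}$. The cross term vanishes after averaging over $h$, using $\sum_h\gamma_{mh}=0$. This gives $\overline{\mathrm{MV}}=(\mathrm{SS}_M+\mathrm{SS}_{MH})/N$.

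\textbf{Comparison.} The equivalence $\overline{\mathrm{HV}}>\overline{\mathrm{MV}} \iff \mathrm{SS}_H>\mathrm{SS}_M$ follows immediately by subtracting the two expressions: the common $\mathrm{SS}_{MH}$ term cancels, and we multiply by $N>0$.

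\textbf{Main obstacle.} The only delicate step is justifying the zero-sum constraints, particularly the interaction constraints, from the informal definitions in Definition~\ref{def:decomp}. I would verify these explicitly. Writing $\gamma_{mh} = \bar y_{mh\cdot} - \bar y_{m\cdot\cdot} - \bar y_{\cdot h\cdot} + \mu$ and summing over $h$ gives $|\mathcal H|\bar y_{m\cdot\cdot} - |\mathcal H|\bar y_{m\cdot\cdot} - |\mathcal H|\mu + |\mathcal H|\mu = 0$. The sum over $m$ vanishes by the same calculation. Balance of the design is what makes each marginal mean an unweighted average of the cell means, so these identities hold.
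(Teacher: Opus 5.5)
Your proof is correct and matches the paper's argument. Both write the within-model deviations as $\beta_h+\gamma_{mh}$, expand the square, remove the cross term with $\sum_m\gamma_{mh}=0$, rescale by $|\mathcal S|$ to reach the sums of squares, exchange the roles of models and harnesses for $\overline{\mathrm{MV}}$, and subtract. Your explicit check of the zero-sum identities supplies a step the paper only asserts when it defines the decomposition terms.
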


\proofnote{\Cref{sec:appendix-variance}}

The interaction enters both averages. Which average is larger therefore depends only on the two
main effects. The decomposition reports the interaction separately.

\section{\benchsuite}
\label{sec:method}

\begin{figure}[!t]
  \centering
  \includegraphics[width=\textwidth]{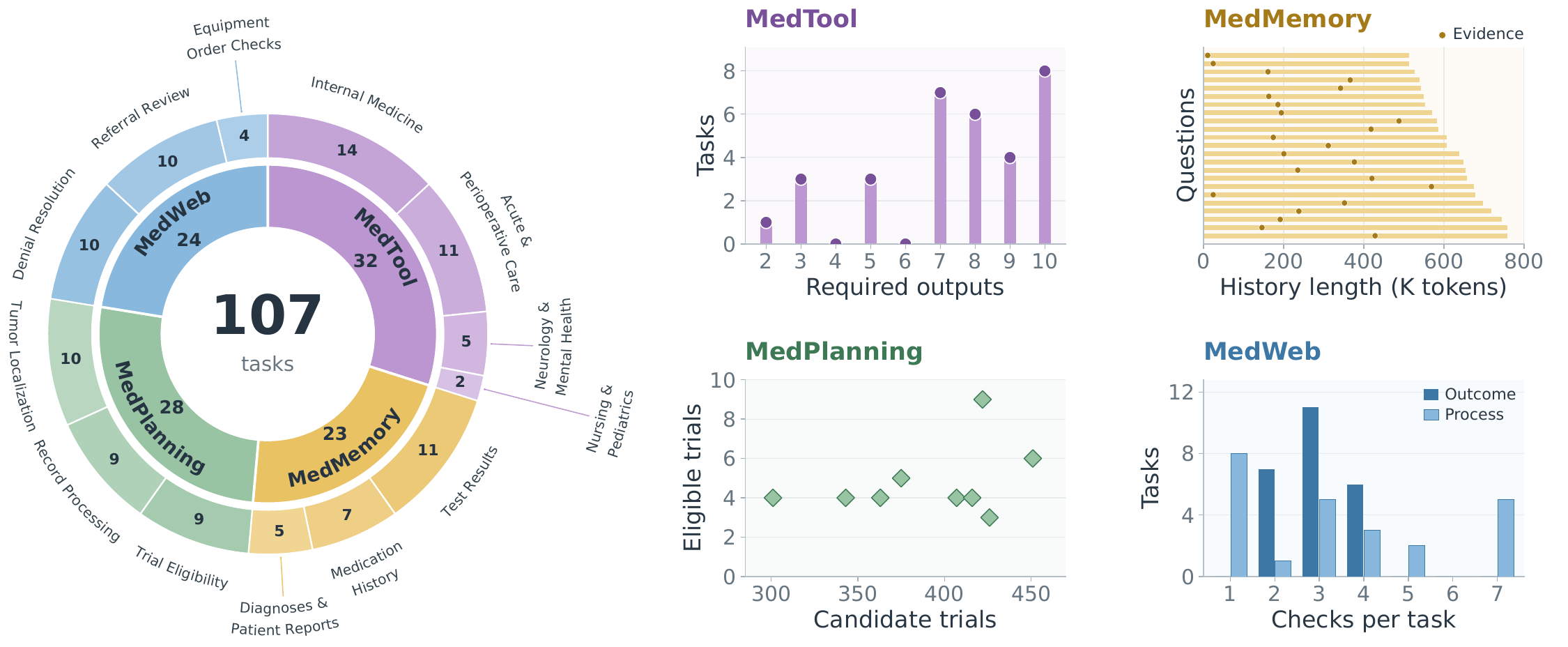}
  \caption{\small\textbf{Task composition and characteristics.} \textbf{Left:} the $107$ task instances
  grouped by clinical domain or workflow (\Cref{tab:task-composition}). \textbf{Right:} required \ctool
  outputs, one \cmem history per bar with source-evidence session starts marked by dots
  (Qwen3.6 tokens), candidate and eligible trials for nine \cplan trial-eligibility tasks, and
  separate distributions of \cweb outcome and process check counts.}
  \label{fig:task-composition}
  \vspace{-6pt}
\end{figure}

\subsection{Environments}
\label{sec:environments}

\benchsuite covers four capability domains: clinical calculation, long-context retrieval,
clinical data workflows and browser interaction. Its $107$ tasks span clinical information
processing and hospital administration (\Cref{fig:task-composition}).

\textbf{Clinical calculation (\ctool, $32$ tasks).} Each task provides a clinician's request and
a patient's record stored in a SQLite database. The output is an answer file listing the required
calculators and their results. The agent must identify which calculations apply, retrieve their
inputs from records containing distractors, and follow dependencies or conditional branches between
calculations to produce $2$--$10$ required outputs.

\textbf{Long-context retrieval (\cmem, $23$ tasks).} The input is a question and a patient history
stored as files, spanning $70$--$100$ dialogue sessions. The output is a short factual answer,
such as a dose, lab value or diagnosis. The history exceeds a single context window, and the
relevant evidence can lie dozens of sessions before the question. The agent must locate that
evidence in the files and bring it into context to answer correctly.

\textbf{Clinical data workflows (\cplan, $28$ tasks).} Each task provides an instruction and a
workspace of clinical data, and requires structured result files. Tasks include trial eligibility
screening, EHR processing and tumor localization on whole-slide images. The challenge is to
complete a workflow across many documents, records or image regions. Trial screening, for example,
requires finding all eligible trials among hundreds of candidates.

\textbf{Browser interaction (\cweb, $24$ tasks).} The input is a workflow request, such as resolving
an insurance denial, and a browser session across an EMR, a payer portal and a fax portal.
The output is the updated application state. The agent must inspect records, navigate between
applications and carry out the required updates, keeping track of which steps remain unfinished.

\subsection{Scoring}
\label{sec:scoring}

\textbf{Outcome.} Each task receives a score in $[0,1]$. We average scores over tasks within each
environment. \ctool multiplies calculator-selection F1 by the fraction of all required outputs
that are correct. The other three environments use binary outcomes. \cmem matches normalized
answers against accepted aliases. \cplan applies the task's own checks, and \cweb requires every
outcome check on the final application state to pass.

\textbf{Task diagnostics (Proc.).} Alongside outcome, \Cref{tab:main-results} reports
calculator-selection F1 for \ctool, reference-set recall for \cplan and the share of process checks
passed for \cweb.
For \cmem, which has no intermediate environment checks, an LLM judge scores a condensed
interaction trace reconstructed from the proxy logs (\Cref{sec:setup}). Its score averages three dimensions:
appropriate tool use, coherent and evidence-grounded actions, and recovery from tool failures.
\Cref{sec:appendix-grading} gives the scoring rules and trace extraction procedure.

\section{\lgharness}
\label{sec:mhlab}

To study individual harness mechanisms, we build \lgharness (\mbox{MedicalHarness} Lab), a minimal
harness with a shared execution loop and three component switches. Implemented in
LangGraph~\citep{langgraph}, it follows a ReAct-style loop~\citep{yao2023react} that constructs
the model's context, calls the model, executes its tool calls, and updates the state for the next
turn (\Cref{fig:mhlab}).

We focus on three common mechanisms that govern how agents retain information, track unfinished
work, and access tools. Medical tasks often accumulate evidence across records, require dependent
steps, and involve repeated interactions with files or clinical
applications. Context management determines which parts of a growing history remain available to
the model. Planning maintains an explicit record of progress and remaining steps. Tool exposure
determines how the model discovers and invokes available tools. Each mechanism can be varied within
the same loop while preserving the task environment and underlying task tools.

\begin{wrapfigure}[14]{R}{0.43\textwidth}
  \vspace{-14pt}
  \centering
  \captionsetup{skip=3pt}
  \includegraphics[width=\linewidth]{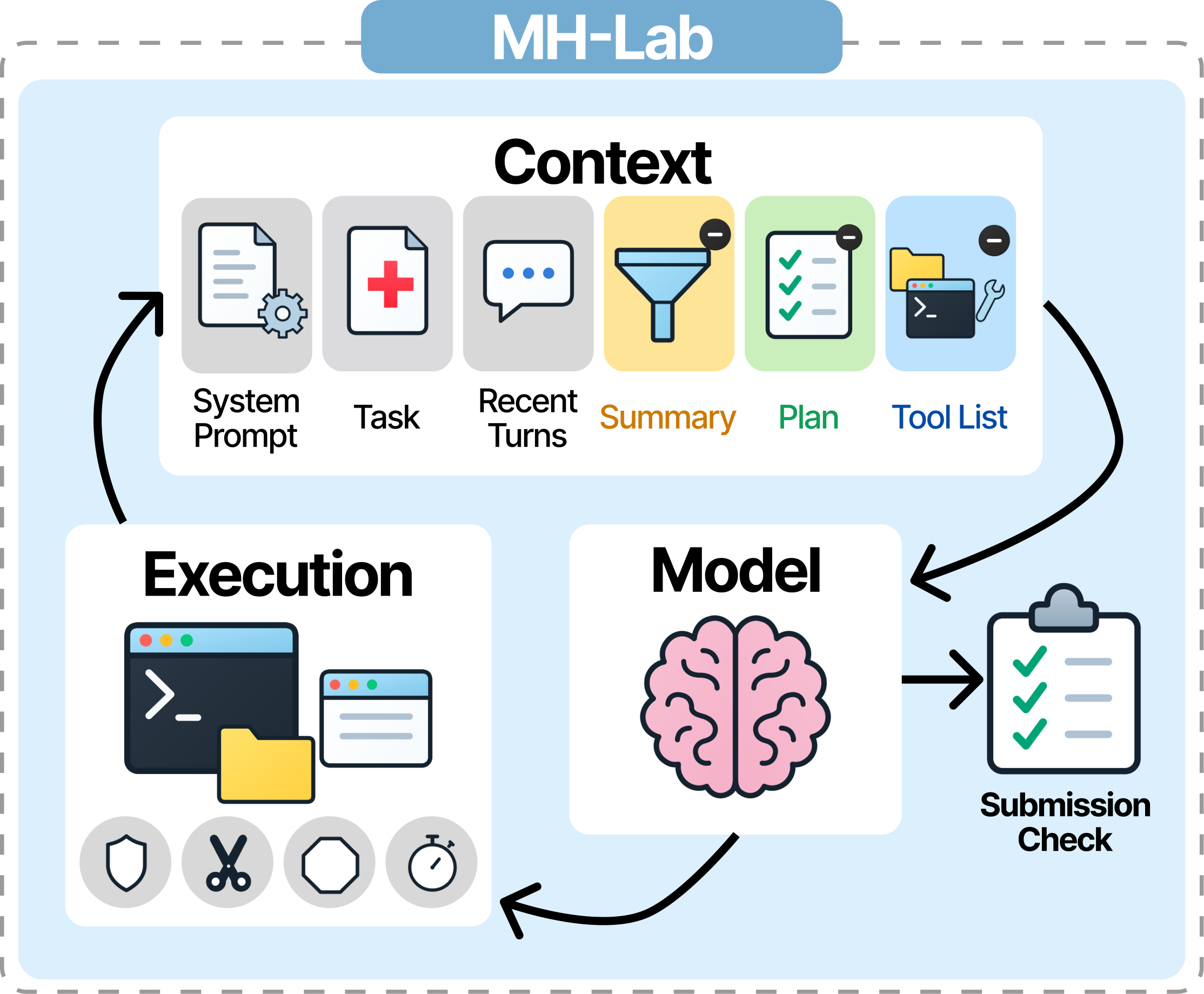}
  \caption{\small\textbf{\lgharness.} A shared execution loop with three component switches.}
  \label{fig:mhlab}
  \vspace{-1\baselineskip}
\end{wrapfigure}
\textbf{\mbox{Context management.}} As the context fills, the same model summarizes older history
for subsequent requests while retaining recent interactions.
Summary calls count toward the episode's budget.
Switching it off retains the accumulated history without summarization.
\textbf{\mbox{Planning.}} An \texttt{update\_plan} tool and plan-related instructions support
tracking progress. The current plan appears on every turn to make completed and unfinished steps
explicit. Switching it off removes the tool,
instructions, reminders and displayed plan.
\textbf{\mbox{Tool exposure.}} Task tools are exposed with their names, descriptions and argument
schemas. Switching off direct exposure keeps the same tools and executors but requires discovery
before calls~through~\texttt{tool\_call}.

For each comparison, we change one mechanism while keeping the model, task instructions, underlying
task tools, stopping rules, and episode budget fixed. We record model inputs, tool calls, plan
updates, summary events, and termination reasons to relate changes in outcomes to changes in
execution. Implementation details are provided in \Cref{sec:appendix-mhloop}.

\section{Experiments}
\label{sec:experiments}

Our experiments address three research questions. We first measure performance variation
across models and harnesses (\hyperref[sec:harness-effect]{RQ1}), then examine how harnesses
differ in task execution (\hyperref[sec:delivery]{RQ2}). Finally, we use \lgharness to test the
effects of individual mechanisms with two models on all four environments
(\hyperref[sec:components]{RQ3}).

\subsection{Experimental setup}
\label{sec:setup}

\textbf{Models and harnesses.} We evaluate five open-weight models spanning $26$B to $120$B parameters:
\mgemma, \mqwen, \mgptoss, \mlightning and \mglm. The main table shows the first three, with the other
two in \Cref{tab:appendix-results}. Each model is served locally by vLLM~\citep{kwon2023vllm} with one
checkpoint and context window, shared by five harnesses through their custom-provider interfaces:
OpenClaw~\citep{openclaw}, Hermes~\citep{hermesagent}, ZeroClaw~\citep{zeroclaw}, Codex~\citep{codexcli}
and Claude Code~\citep{claudecode}. Harnesses keep their shipped settings apart from the changes listed
in \Cref{sec:appendix-harness}.

\textbf{Execution.} An episode is one run of a task by a model--harness pair under one seed, with a
one-hour wall-clock limit, in a sandbox that hides graders, gold answers and other episodes. A proxy on
every model call records requests, responses and token counts, forming the execution traces of
\Cref{sec:delivery}.

\textbf{Measures.} Besides outcome and task diagnostics (\Cref{sec:scoring}), we report the answer rate on
\ctool and \cmem, the fraction of episodes with a scorable answer in the answer file or final reply, and
token use, the input and output tokens of all model calls in millions per episode (MTok).

\textbf{Protocol.} Every model--harness pair runs all $107$ tasks with seeds $42$, $43$ and $44$, for
$8{,}025$ episodes. The mechanism study runs \mqwen and \mgptoss in \lgharness on the same tasks and seeds,
with a $65{,}536$-token input allowance per request. Tables report means and SDs over seeds, and paired
comparisons bootstrap tasks after averaging over seeds, giving $95\%$
intervals~\citep{card2020power,miller2024errorbars,madaan2024variance,heineman2025signal}.

\begin{table*}[t]
  \centering
  \caption{\small\textbf{Outcome and process under a fixed model.} Only the harness varies within a block. Mean $\pm$ SD over seeds. \textbf{Proc.}: calculator selection F1 (\ctool), recall of the reference set on the $17$ \cplan tasks that have one, share of process checks passed (\cweb). $^\dagger$\cmem records only the answer, so its Proc.\ is the LLM judge's score (\cref{sec:appendix-grading}). \textbf{MTok}: million tokens per episode. \emph{Rank}: mean outcome rank within a block. \textbf{Bold} and \underline{underlined}: best and second best Outc.\ and Proc.\ per capability within a block. Rows are ordered by mean rank over all five models, worst first.}
  \label{tab:main-results}
  \setlength{\tabcolsep}{3pt}
  \renewcommand{\arraystretch}{1.12}
  \resizebox{\textwidth}{!}{%
  \begin{tabular}{lI cccI cccI cccI cccI c}
    \Xhline{1.2pt}
    \multirow{2}{*}{\textbf{Harness}}
      & \multicolumn{3}{cI}{\textbf{\ctool}}
      & \multicolumn{3}{cI}{\textbf{\cmem}}
      & \multicolumn{3}{cI}{\textbf{\cplan}}
      & \multicolumn{3}{cI}{\textbf{\cweb}}
      & \multirow{2}{*}{\textbf{Rank}} \\
      & \textbf{Outc.}$\uparrow$  & \textbf{Proc.}$\uparrow$  & \textbf{MTok}$\downarrow$  & \textbf{Outc.}$\uparrow$  & \textbf{Proc.}$\uparrow$$^\dagger$  & \textbf{MTok}$\downarrow$  & \textbf{Outc.}$\uparrow$  & \textbf{Proc.}$\uparrow$  & \textbf{MTok}$\downarrow$  & \textbf{Outc.}$\uparrow$  & \textbf{Proc.}$\uparrow$  & \textbf{MTok}$\downarrow$ & \\
    \Xhline{0.5pt}
    \rowcolor{mhBlock!70}
    \multicolumn{14}{c}{\color{mhNavy}\textbf{\mgemma}} \\
    \baseline{ZeroClaw}      & \meanstd{0.08}{0.01} & \meanstd{0.37}{0.01} & 0.11      & \meanstd{0.49}{0.03} & \meanstd{0.32}{0.01} & 0.48      & \secondresult{\meanstd{0.01}{0.02}} & \meanstd{0.20}{0.02} & 1.79      & \meanstd{0.03}{0.02} & \meanstd{0.20}{0.05} & 6.67      & 3.75 \\
    \baseline{OpenClaw}      & \secondresult{\meanstd{0.10}{0.01}} & \meanstd{0.45}{0.04} & 0.52      & \meanstd{0.57}{0.08} & \meanstd{0.35}{0.05} & 1.04      & \meanstd{0.00}{0.00} & \meanstd{0.20}{0.01} & 4.51      & \meanstd{0.04}{0.04} & \firstresult{\meanstd{0.38}{0.06}} & 11.95      & 3.00 \\
    \baseline{Hermes}      & \firstresult{\meanstd{0.13}{0.04}} & \secondresult{\meanstd{0.46}{0.05}} & 0.11      & \meanstd{0.57}{0.12} & \meanstd{0.23}{0.07} & 0.78      & \meanstd{0.00}{0.00} & \firstresult{\meanstd{0.32}{0.04}} & 1.04      & \meanstd{0.01}{0.02} & \meanstd{0.15}{0.13} & 4.02      & 3.75 \\
    \baseline{Codex}      & \meanstd{0.09}{0.01} & \firstresult{\meanstd{0.47}{0.04}} & 0.06      & \secondresult{\meanstd{0.77}{0.05}} & \firstresult{\meanstd{0.48}{0.07}} & 0.73      & \meanstd{0.01}{0.02} & \secondresult{\meanstd{0.31}{0.04}} & 2.30      & \firstresult{\meanstd{0.11}{0.02}} & \meanstd{0.29}{0.05} & 20.04      & 2.25 \\
    \baseline{Claude Code}      & \meanstd{0.06}{0.02} & \meanstd{0.28}{0.01} & 0.29      & \firstresult{\meanstd{0.78}{0.04}} & \secondresult{\meanstd{0.36}{0.14}} & 1.23      & \firstresult{\meanstd{0.04}{0.04}} & \meanstd{0.29}{0.07} & 4.75      & \secondresult{\meanstd{0.10}{0.10}} & \secondresult{\meanstd{0.30}{0.02}} & 9.66      & 2.25 \\
    \Xhline{0.5pt}
    \rowcolor{mhBlock!70}
    \multicolumn{14}{c}{\color{mhNavy}\textbf{\mqwen}} \\
    \baseline{ZeroClaw}      & \meanstd{0.08}{0.03} & \meanstd{0.25}{0.06} & 0.41      & \meanstd{0.67}{0.07} & \meanstd{0.30}{0.07} & 0.48      & \meanstd{0.02}{0.02} & \meanstd{0.19}{0.05} & 4.03      & \meanstd{0.32}{0.09} & \meanstd{0.45}{0.02} & 6.73      & 5.00 \\
    \baseline{OpenClaw}      & \meanstd{0.10}{0.01} & \meanstd{0.42}{0.03} & 0.47      & \meanstd{0.83}{0.09} & \meanstd{0.44}{0.04} & 0.83      & \firstresult{\meanstd{0.10}{0.02}} & \secondresult{\meanstd{0.47}{0.03}} & 2.84      & \secondresult{\meanstd{0.46}{0.04}} & \meanstd{0.47}{0.04} & 7.93      & 2.50 \\
    \baseline{Hermes}      & \firstresult{\meanstd{0.13}{0.03}} & \meanstd{0.45}{0.05} & 0.24      & \meanstd{0.78}{0.08} & \secondresult{\meanstd{0.55}{0.04}} & 0.41      & \meanstd{0.08}{0.05} & \firstresult{\meanstd{0.50}{0.06}} & 1.69      & \meanstd{0.33}{0.07} & \meanstd{0.42}{0.07} & 7.83      & 3.00 \\
    \baseline{Codex}      & \secondresult{\meanstd{0.12}{0.01}} & \firstresult{\meanstd{0.50}{0.01}} & 0.23      & \firstresult{\meanstd{0.86}{0.03}} & \firstresult{\meanstd{0.65}{0.04}} & 0.62      & \meanstd{0.06}{0.05} & \meanstd{0.45}{0.01} & 2.13      & \meanstd{0.42}{0.04} & \secondresult{\meanstd{0.47}{0.03}} & 5.08      & 2.50 \\
    \baseline{Claude Code}      & \meanstd{0.10}{0.02} & \secondresult{\meanstd{0.46}{0.03}} & 0.25      & \secondresult{\meanstd{0.86}{0.10}} & \meanstd{0.42}{0.03} & 0.62      & \secondresult{\meanstd{0.10}{0.04}} & \meanstd{0.45}{0.03} & 2.68      & \firstresult{\meanstd{0.47}{0.05}} & \firstresult{\meanstd{0.48}{0.01}} & 8.11      & 2.00 \\
    \Xhline{0.5pt}
    \rowcolor{mhBlock!70}
    \multicolumn{14}{c}{\color{mhNavy}\textbf{\mgptoss}} \\
    \baseline{ZeroClaw}      & \meanstd{0.12}{0.02} & \meanstd{0.46}{0.03} & 0.10      & \meanstd{0.36}{0.03} & \meanstd{0.26}{0.07} & 0.29      & \meanstd{0.01}{0.02} & \meanstd{0.11}{0.06} & 0.63      & \meanstd{0.00}{0.00} & \secondresult{\meanstd{0.27}{0.02}} & 1.24      & 4.25 \\
    \baseline{OpenClaw}      & \meanstd{0.12}{0.01} & \meanstd{0.43}{0.02} & 0.24      & \meanstd{0.51}{0.05} & \meanstd{0.32}{0.03} & 0.51      & \meanstd{0.02}{0.02} & \meanstd{0.20}{0.04} & 0.94      & \secondresult{\meanstd{0.03}{0.02}} & \meanstd{0.22}{0.03} & 1.77      & 3.75 \\
    \baseline{Hermes}      & \meanstd{0.12}{0.01} & \secondresult{\meanstd{0.49}{0.03}} & 0.10      & \secondresult{\meanstd{0.68}{0.07}} & \meanstd{0.22}{0.01} & 0.37      & \secondresult{\meanstd{0.04}{0.04}} & \secondresult{\meanstd{0.38}{0.03}} & 0.61      & \meanstd{0.00}{0.00} & \meanstd{0.06}{0.03} & 0.22      & 2.75 \\
    \baseline{Codex}      & \secondresult{\meanstd{0.13}{0.02}} & \meanstd{0.44}{0.06} & 0.22      & \meanstd{0.68}{0.07} & \firstresult{\meanstd{0.38}{0.04}} & 0.51      & \meanstd{0.04}{0.00} & \firstresult{\meanstd{0.42}{0.08}} & 0.63      & \meanstd{0.00}{0.00} & \meanstd{0.00}{0.00} & 0.10      & 3.25 \\
    \baseline{Claude Code}      & \firstresult{\meanstd{0.16}{0.02}} & \firstresult{\meanstd{0.52}{0.02}} & 0.30      & \firstresult{\meanstd{0.80}{0.09}} & \secondresult{\meanstd{0.35}{0.06}} & 0.47      & \firstresult{\meanstd{0.05}{0.04}} & \meanstd{0.28}{0.03} & 1.55      & \firstresult{\meanstd{0.26}{0.09}} & \firstresult{\meanstd{0.33}{0.09}} & 4.09      & 1.00 \\
    \Xhline{1.2pt}
  \end{tabular}}
\end{table*}

\subsection{RQ1: How much does the harness matter?}
\label{sec:harness-effect}

\begin{figure*}[!t]
  \centering
  \includegraphics[width=\textwidth]{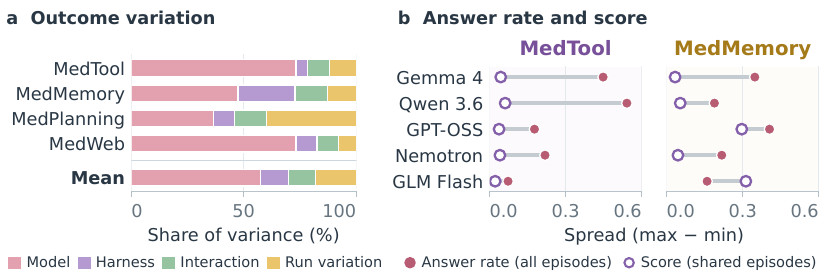}
  \caption{\small\textbf{Outcome variation and answer production.}
  Five models, five harnesses and three seeds.
  \textbf{(a)} Variance shares of per-seed mean outcomes (\cref{eq:anova}). Mean weights the four
  environments equally. \textbf{(b)} Per-model spreads (maximum minus minimum) across harnesses
  in answer rate over all episodes and mean score on matched task--seed pairs answered by every
  harness. Score is the task outcome (\Cref{sec:scoring}). Answers in either the file or
  final reply count.}
  \label{fig:harness-comparison}
\end{figure*}

\Cref{tab:main-results} compares harnesses under the three main models.
\Cref{fig:harness-comparison}a splits the outcome variance of all five models into four shares, for
the model, the harness, their interaction and run-to-run variation across seeds (\Cref{eq:anova}).
We ask how large the harness share is and whether the best harness stays the same across models
and tasks. Two findings emerge.

\noindent\textbf{The model explains the largest share, but the harness matters differently
across tasks.} Averaged over the four environments, the model accounts for $58\%$ of the variance,
the harness for $12\%$, their interaction for $12\%$ and run-to-run variation across seeds for the
remaining $18\%$. The harness and its interaction thus account for about a quarter, but the shares
differ by environment. On \cmem the harness main effect reaches $25\%$, and $40\%$ with the
interaction, while on \cplan, where success is rare, run-to-run variation accounts for
$40\%$, comparable to the model's $36\%$.
\Cref{prop:hvmv} compares the two directions directly: across all five models, HV/MV ranges from
$0.18$ on \ctool to $0.64$ on \cmem, so changing the model moves scores more than changing the
harness everywhere. Among the three main models, HV/MV on \cmem rises to $1.45$, the only case
where the harness main effect exceeds the model's (\Cref{sec:appendix-variance}).

\textbf{No harness is best across models and tasks.} The leader differs by model even within one
environment. On \ctool, Hermes leads three models, Claude Code leads \mgptoss, and Codex leads
\mlightning. Leads are also small relative to uncertainty from the sampled tasks. Only $2$ of the
$20$ model--environment pairs have a $95\%$ interval for the leader's gap over the runner-up that
excludes zero, and both are Claude Code under \mgptoss, on \cmem and \cweb. The clearest leads thus
hold for a single model. \Cref{tab:ranking-stability} gives all rankings and intervals.

\subsection{RQ2: How do harnesses differ in task execution?}
\label{sec:delivery}

We examine whether episodes end with an answer and what makes each environment difficult.

\textbf{Harnesses differ in whether an episode ends with an answer.} \fone~On \ctool and \cmem,
\Cref{fig:harness-comparison}b compares the spread across harnesses in answer rate, over all
episodes, with the spread in score, over the task--seed pairs every harness answered. The
answer-rate spread is the larger in $9$ of the $10$ model--environment pairs. Scores on these shared
questions differ by only $0.02$--$0.06$ in eight pairs, while answer rates differ by up to $0.54$.
The exception is \mglm on \cmem
(\Cref{sec:appendix-execution}). \ftwo~Unsuccessful episodes end in different
ways. Under \mqwen (\Cref{fig:task-diagnostics}a), ZeroClaw reaches the wall-clock limit in $50$ of
$96$ \ctool episodes, while Codex and Claude Code usually produce a scorable output. A single outcome score hides these
differences, so we report answer rates alongside it (\Cref{sec:appendix-proofs} decomposes the variance).

\begin{figure*}[!t]
  \centering
  \includegraphics[width=\textwidth]{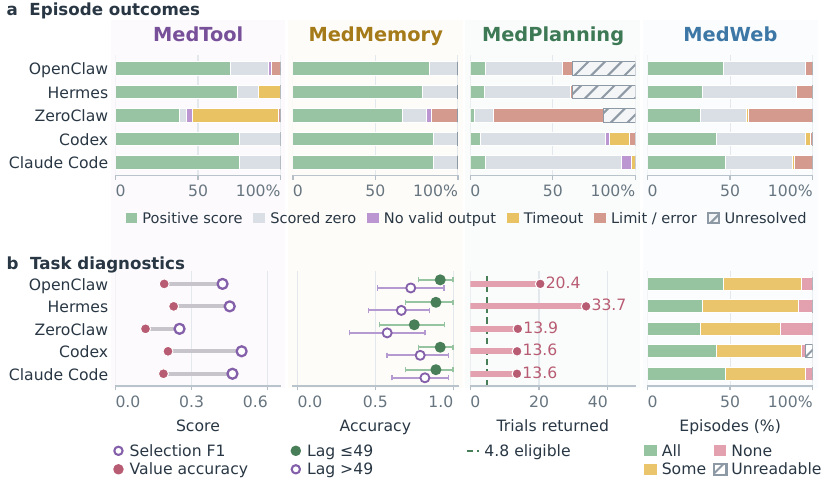}
  \captionsetup{skip=3pt}
  \caption{\small\textbf{Episode outcomes and task diagnostics under \mqwen}, over three seeds.
  \textbf{(a)} Episodes grouped by score, ending and output evidence. Positive score includes
  partial credit. \textbf{(b)} \ctool: selection F1 and value accuracy. \cmem: accuracy by
  evidence lag in sessions, with $95\%$ task-bootstrap intervals. \cplan: mean trials returned
  versus eligible on nine trial tasks, while (a) covers all \cplan tasks. \cweb: episodes passing
  all, some or no outcome checks, with unreadable states separate.}
  \label{fig:task-diagnostics}
  \vspace{-8pt}
\end{figure*}

\textbf{Each environment exposes a different difficulty.} \Cref{fig:task-diagnostics}b shows the task
diagnostics. \fone~On \ctool, calculator-selection F1 is about twice value accuracy
across harnesses, so choosing the right calculators does not ensure correct values.
\ftwo~On \cmem, questions with more distant evidence have lower accuracy under every harness. \fthree~On the \cplan trial tasks, agents return $13.6$ to
$33.7$ trials on average against $4.8$ eligible, three to seven times too many. \ffour~On \cweb, fewer than half of episodes pass every
outcome check under any harness, and about half pass only some. These diagnostics separate selecting
tools, retrieving evidence, filtering candidates and completing workflows, which one score
would compress.

\textbf{Mechanisms inside the product harnesses.} At the native windows, summarization fires in only
$2$--$5\%$ of episodes, which caps its share of these gaps at that fraction. On \ctool and \cmem, the gaps
come mainly from answer delivery, and switching the matching mechanism inside a product harness moves
outcomes accordingly: ZeroClaw's submission check raises \mqwen's \cmem score from $0.42$ to $0.51$, and
turning off Hermes's tool search lifts \mqwen from $0.17$ to $0.50$ on $12$ \cweb tasks
(\Cref{sec:appendix-modules}).

\subsection{RQ3: Which mechanisms matter, and when?}
\label{sec:components}

\Cref{tab:final-switches} compares the full \lgharness with variants that disable one mechanism
at a time, keeping task settings and other execution limits fixed (\Cref{sec:mhlab}). Its input
allowance is below the native windows, where summarization is rare (\Cref{sec:appendix-modules}), so
that context management fires.

\begin{table*}[t]
  \centering
  \caption{\small\textbf{Removing one mechanism from \lgharness.} $65{,}536$-token input budget, seeds 42/43/44. \emph{full}: outcome and million tokens per episode, mean $\pm$ SD over seeds. Other rows: paired change in outcome ($\pm$ SD of the per-seed change) and token ratio to \emph{full}. \Cref{tab:final-intervals} gives the intervals.}
  \label{tab:final-switches}
  \setlength{\tabcolsep}{4pt}\renewcommand{\arraystretch}{1.1}
  \resizebox{\textwidth}{!}{%
  \begin{tabular}{lI cc I cc I cc I cc}
    \Xhline{1.2pt}
    \multirow{2}{*}{\textbf{Condition}} & \multicolumn{2}{cI}{\textbf{\ctool}} & \multicolumn{2}{cI}{\textbf{\cmem}} & \multicolumn{2}{cI}{\textbf{\cplan}} & \multicolumn{2}{c}{\textbf{\cweb}} \\
    & \textbf{Outc.}$\uparrow$ & \textbf{MTok}$\downarrow$ & \textbf{Outc.}$\uparrow$ & \textbf{MTok}$\downarrow$ & \textbf{Outc.}$\uparrow$ & \textbf{MTok}$\downarrow$ & \textbf{Outc.}$\uparrow$ & \textbf{MTok}$\downarrow$ \\
    \Xhline{0.5pt}
    \rowcolor{mhBlock!70}
    \multicolumn{9}{c}{\color{mhNavy}\textbf{\mqwen}} \\
    \emph{full} & \meanstd{0.10}{0.02} & \meanstd{0.23}{0.01} & \meanstd{0.88}{0.03} & \meanstd{0.43}{0.12} & \meanstd{0.12}{0.05} & \meanstd{1.62}{0.26} & \meanstd{0.44}{0.02} & \meanstd{5.96}{0.38} \\
    $-$ context management & \meanstd{$+$0.01}{0.02} & \meanstd{$\times$1.1}{0.0} & \meanstd{$-$0.12}{0.03} & \meanstd{$\times$0.5}{0.2} & \meanstd{$-$0.04}{0.06} & \meanstd{$\times$0.4}{0.1} & \meanstd{$-$0.39}{0.09} & \meanstd{$\times$0.3}{0.0} \\
    $-$ planning & \meanstd{$-$0.02}{0.04} & \meanstd{$\times$1.0}{0.1} & \meanstd{$-$0.04}{0.04} & \meanstd{$\times$0.8}{0.4} & \meanstd{$+$0.01}{0.08} & \meanstd{$\times$0.9}{0.2} & \meanstd{0.00}{0.07} & \meanstd{$\times$0.7}{0.2} \\
    $-$ tool exposure & \meanstd{0.00}{0.02} & \meanstd{$\times$3.0}{1.6} & \meanstd{$-$0.14}{0.05} & \meanstd{$\times$2.4}{0.7} & \meanstd{$+$0.01}{0.05} & \meanstd{$\times$2.6}{0.5} & \meanstd{0.00}{0.07} & \meanstd{$\times$0.8}{0.1} \\
    \Xhline{0.5pt}
    \rowcolor{mhBlock!70}
    \multicolumn{9}{c}{\color{mhNavy}\textbf{\mgptoss}} \\
    \emph{full} & \meanstd{0.14}{0.03} & \meanstd{0.28}{0.07} & \meanstd{0.72}{0.03} & \meanstd{0.66}{0.13} & \meanstd{0.06}{0.02} & \meanstd{0.99}{0.38} & \meanstd{0.18}{0.08} & \meanstd{7.70}{1.58} \\
    $-$ context management & \meanstd{$-$0.01}{0.04} & \meanstd{$\times$1.0}{0.3} & \meanstd{$+$0.06}{0.05} & \meanstd{$\times$0.8}{0.1} & \meanstd{$+$0.02}{0.02} & \meanstd{$\times$0.6}{0.3} & \meanstd{$-$0.17}{0.14} & \meanstd{$\times$0.4}{0.1} \\
    $-$ planning & \meanstd{$-$0.02}{0.04} & \meanstd{$\times$0.4}{0.1} & \meanstd{$-$0.01}{0.03} & \meanstd{$\times$0.7}{0.2} & \meanstd{$+$0.02}{0.02} & \meanstd{$\times$0.9}{0.4} & \meanstd{$-$0.18}{0.09} & \meanstd{$\times$0.2}{0.1} \\
    $-$ tool exposure & \meanstd{$-$0.04}{0.06} & \meanstd{$\times$3.2}{1.8} & \meanstd{$-$0.06}{0.07} & \meanstd{$\times$1.3}{0.4} & \meanstd{$-$0.04}{0.00} & \meanstd{$\times$1.9}{0.7} & \meanstd{$-$0.15}{0.14} & \meanstd{$\times$1.0}{0.0} \\
    \Xhline{1.2pt}
  \end{tabular}}
\end{table*}

\textbf{Context management keeps long trajectories running.} Without summaries, accumulated
history overflows the input budget in $67$ of $72$ \cweb episodes for \mqwen and $48$ of $65$
for \mgptoss (\Cref{tab:final-mechanisms}). Outcomes fall by $0.39$ and $0.17$, respectively. Lower token use reflects early
termination. On \cplan,
binary success stays rare, but reference-set recall falls by $0.29$ for \mqwen and $0.16$ for \mgptoss
(\Cref{tab:final-intervals}), so summaries preserve partial progress.

\textbf{Planning helps some models persist.} Without the plan tool and displayed plan,
\mgptoss's median turns on the $60$ paired \cweb episodes fall from $176.5$ to $63$, and outcome and
process scores drop by $0.18$ and $0.19$. \mqwen shows no clear outcome change on \cweb. On \ctool, however, its episodes
without the required output rise from $24$ to $47$ of $96$ and timeouts double. Similar scores
can conceal changes in whether the agent finishes.

\textbf{Tool interfaces must fit the model's calling behavior.} With tools behind search and
\texttt{tool\_call}, models discover a tool but invoke it by name, causing rejected calls.
This occurs in $95$ of $96$ \ctool episodes for \mqwen, which uses three times as many tokens
at the same outcome. \mgptoss loses outcome on all four environments, most on \cweb ($-0.15$). These failures
arise when discovery must be followed by \texttt{tool\_call}, so a discovery interface should make
the found tool directly callable, as Claude Code does.

\section{Conclusion}
\label{sec:conclusion}

We introduced \benchsuite and used it to measure how outcomes vary with the model and harness
under controlled conditions. The model explains the largest share, but the harness and its interaction
with the model account for about a quarter of the variance, and no single harness is best across models
and tasks. On clinical calculation and long-context retrieval, harnesses usually
differ more in answer rate than in scores on questions answered by every harness. A mechanism changes outcomes only when it fires and when its interface fits the model. Inside \lgharness, context management
decides whether long traces survive and planning sustains longer browser workflows for \mgptoss. Hiding tools behind a search
interface can cost tokens or accuracy when models call the tools by name. Medical agent
results should therefore report the model--harness pair and, when an answer is required, its
answer rate alongside the score. \Cref{sec:appendix-design} turns these findings into design directions
for medical agent harnesses.

\ifdefined\MedicalHarnessArxiv
\else
  \section*{AI Use Statement}
We used Large Language Models (ChatGPT/Claude/Gemini) for grammatical correction and editing of
this manuscript, and as coding assistants for parts of the evaluation infrastructure, the analysis
scripts and \lgharness. Separately, language models served as instruments of the study:
\texttt{gpt-4o-mini} scored the process rubric (\Cref{sec:appendix-grading}), and non-target models
reviewed candidate tasks during verification (\Cref{sec:appendix-bench}). The authors reviewed all code
and verified every reported number against the archived episode records. All technical
contributions, analyses and scientific conclusions are the authors' own responsibility.

  \section*{Ethics Statement}
\methodname is a research study of LLM agents on medical tasks. The agents evaluated here are
research systems, and their scores do not indicate fitness for clinical use, which would require
clinical validation and expert review. All tasks derive from public benchmarks built on synthetic or
de-identified data: MedMCP-Calc (MIT), HealthAgentBench (MIT, only tasks that need no credentialed
data), HealthAdminBench (Apache-2.0, synthetic portal records) and MedMemoryBench (synthetic patient
personas seeded from de-identified cases). No task uses credentialed clinical data. Agents ran on
locally served models, and only the process judge received condensed episode traces through an
external API, all from synthetic or public tasks. Because the MedMemoryBench data license is stated
inconsistently upstream, we release item identifiers, hashes and the construction code rather than
the chart text.

  \section*{Reproducibility Statement}
Every model is an open-weight checkpoint served locally with a fixed context window (\Cref{sec:setup}),
and every configuration was run with seeds $42$, $43$ and $44$. \Cref{sec:appendix-bench} describes how
each environment was built and verified, \Cref{sec:appendix-harness} lists the harness versions and the
defaults we changed, \Cref{sec:appendix-grading} gives the grading rules, and \Cref{sec:appendix-mhloop} specifies
\lgharness. Code, harness configurations and task manifests are available at
\repositorylink, with MedMemoryBench items released as
identifiers and hashes together with their construction code.

\fi

\bibliography{iclr2027_conference}
\bibliographystyle{iclr2027_conference}

\clearpage
\appendix
\section{Design implications for medical agent harnesses}
\label{sec:appendix-design}

Our execution diagnostics and mechanism experiments (\Cref{sec:delivery,sec:components}) suggest
three design directions for harnesses that carry out medical tasks.

\textbf{Preserve clinical evidence.} On \cmem, accuracy falls as supporting evidence lies further
back in the record. The context-management experiments also show that summaries can keep long
trajectories running when history exceeds the input budget. These findings motivate context
management that controls length while keeping relevant evidence accessible. A task-specific evidence
record could retain key facts with their dates and source locations, allowing the agent to revisit
the original record when needed. For example, after compression, a medication entry should still
indicate when it was recorded and whether treatment was ongoing or had ended.

\textbf{Ground plans in clinical criteria.} On trial-screening tasks, agents frequently return
ineligible candidates. The planning experiments also show that maintaining a plan helps some
models sustain longer workflows, but does not consistently improve outcomes. A useful direction is
to make each plan step resolve a specific clinical condition. For trial eligibility, the plan could
track each candidate's inclusion and exclusion criteria, the supporting evidence and whether each
condition is met, not met or unknown. Unresolved conditions would guide further retrieval before
the agent decides whether to include the trial in its final result.

\textbf{Verify task completion.} On \ctool, selecting the right calculators does not ensure correct
values, and on \cweb, many episodes satisfy only some outcome checks. These observations motivate
checks of observable task requirements after key actions and before stopping. A simple version
already helps when the model has the answer but has not filed it: enabling ZeroClaw's submission
check, which adds a reminder when the answer file is missing, raises \mqwen's \cmem score from $0.42$
to $0.51$ (\Cref{sec:appendix-modules}). Calculator workflows
could check that every requested quantity has a value and that its inputs match the patient record.
Browser workflows could reread the relevant portal state after an action, such as updating an
authorization request, to confirm that the intended change took effect. Such checks would identify
incomplete steps and give the agent a concrete target for a retry.

\section{Benchmark construction}
\label{sec:appendix-bench}

We build each environment from a public benchmark, selecting the tasks that exercise its capability
and verifying every task we keep (\Cref{tab:appendix-bench}). Of the $2{,}293$ upstream tasks, $107$
are kept. Automatic checks and, on \ctool and \cmem, models outside the study narrow the candidates.
A human expert then reviews every kept task, checking that its gold answer is correct, that neither
the prompt nor the recent record gives the answer away, that the task is unambiguous and that it is
clinically sound. \Cref{tab:task-composition} expands the descriptive groups in
\Cref{fig:task-composition}.

\begin{table}[h]
  \centering
  \caption{\small\textbf{How each environment was built.} Each selection step lists the tasks it leaves.
  A human expert then reviews every kept task.}
  \label{tab:appendix-bench}
  \footnotesize
  \setlength{\tabcolsep}{3pt}
  \begin{tabular}{l>{\raggedright\arraybackslash}p{2.5cm}r>{\raggedright\arraybackslash}p{4.5cm}r>{\raggedright\arraybackslash}p{2.45cm}}
    \Xhline{1.2pt}
    \textbf{Env.} & \textbf{Source (license)} & \textbf{Pool} & \textbf{Selection (tasks left)} & \textbf{Kept} & \textbf{Verification} \\
    \Xhline{0.5pt}
    \ctool & MedMCP-Calc (MIT) & $118$ & difficulty screen ($107$) $\to$ reliability contract and independent model audit ($32$) & $32$ & scorer accepts the gold and rejects a perturbed answer \\
    \cmem & MedMemoryBench (Apache-2.0) & $1{,}986$ & exact-match answers ($349$) $\to$ long history ($241$) $\to$ answer not restated in recent sessions ($214$) $\to$ entity and date questions whose gold the record supports ($51$) $\to$ two blind model reviews ($14$, plus $9$ from an earlier round) & $23$ & reviewers lock answers before the gold is compared \\
    \cplan & HealthAgentBench (MIT) & $54$ & runs without credentialed data ($28$) & $28$ & upstream task tests \\
    \cweb & HealthAdminBench (Apache-2.0) & $135$ & $2$--$8$ outcome checks and a process check, fixed quota per family ($24$) & $24$ & upstream deterministic checks \\
    \Xhline{1.2pt}
  \end{tabular}
\end{table}

\textbf{\ctool.} Each task gives a clinical request and a patient identifier. The agent reads a
per-task SQLite record through read-only SQL and Python, and the needed values sit among
distractors. The difficulty screen runs GPT-5.4-mini, which we do not evaluate. Tasks it solves from
the given inputs or with plain SQL and Python are too easy, and tasks on which it makes no progress at
all point to knowledge the task does not supply, so the screen removes both. Each remaining task must
then pass a contract: sufficient inputs, a complete and consistent gold answer, a clinically coherent
case, a real dependency or branch between calculators, verified formulas, a scorer that accepts the
gold and rejects a perturbed answer, and an accept verdict from an independent model audit.

\textbf{\cmem.} Each task asks one question about a history of $70$--$100$ consultation sessions, with
the evidence at least $10$ sessions before the question. Upstream questions fail in two main ways.
Some answers are restated in the recent sessions, so the question needs no long-range retrieval.
Others have a gold answer that the visible record does not support, most often a date kept only in
metadata. Automatic checks remove clear cases of both, and blind review catches the rest. Two
reviewers from models outside the study read each remaining question with its history, one after the
other. Each judges whether the question is valid and locks an answer before the gold is revealed, and
a question is kept only when both judge it valid and their answers agree. Comparing the agreed answers
with the gold after the lock then checks each gold answer. Where the reviewers give the same value in
the record's own words, we add that wording as an accepted answer. Answering without the history, or
searching only the recent sessions, scores $0/23$, and a search over the whole chart scores $4/23$.

\textbf{\cplan.} The $28$ tasks are all HealthAgentBench tasks that run without credentialed data, so
the environment can be rerun in full. They cover trial eligibility, EHR data quality, EHR-to-MEDS
conversion and tumor-region selection on whole-slide images (\Cref{tab:task-composition}), and
upstream marks $23$ of them hard and $5$ as controls.

\textbf{\cweb.} We keep tasks whose outcome rests on $2$--$8$ final-state checks and that include at
least one process check, which the Proc.\ diagnostic needs. The check range leaves out the one family
whose outcome joins $9$--$17$ checks. Within each family we take the first tasks by index.

\begin{table}[t]
  \centering
  \caption{\small\textbf{Composition of the $107$ selected task instances.}
  The $13$ display groups summarize source clinical areas and care settings (\ctool),
  requested information (\cmem), and task objectives (\cplan, \cweb).}
  \label{tab:task-composition}
  \footnotesize
  \setlength{\tabcolsep}{3pt}
  \renewcommand{\arraystretch}{1.15}
  \begin{tabular}{@{}>{\raggedright\arraybackslash}p{0.15\linewidth}
    >{\raggedright\arraybackslash}p{0.245\linewidth}
    >{\centering\arraybackslash}p{0.035\linewidth}
    >{\raggedright\arraybackslash}p{\dimexpr0.57\linewidth-6\tabcolsep\relax}@{}}
    \Xhline{1.2pt}
    \textbf{Environment} & \textbf{Display group} & $\boldsymbol{n}$ & \textbf{Detailed content} \\
    \Xhline{0.5pt}
    \ctool\newline ($32$ tasks)
      & Internal Medicine & $14$
      & Gastroenterology/hepatology ($5$), infectious diseases ($3$), respiratory medicine ($2$),
        hematology/oncology ($2$), cardiovascular medicine ($1$), and nephrology/electrolytes ($1$). \\
      & Acute \& Perioperative Care & $11$
      & Emergency medicine/trauma ($5$), intensive care ($2$), and anesthesia/perioperative care ($4$). \\
      & Neurology \& Mental Health & $5$
      & Neurosurgery/neurology ($3$) and mental/psychological health ($2$). \\
      & Nursing \& Pediatrics & $2$
      & Inpatient nursing-risk assessment ($1$) and pediatric-asthma assessment ($1$). \\
    \Xhline{0.5pt}
    \cmem\newline ($23$ tasks)
      & Test Results & $11$
      & Retrieve recorded laboratory, imaging, or measured examination findings, including changes in measured indices. \\
      & Medication History & $7$
      & Retrieve drug identities or classes, doses, dose changes, discontinuation, or the identity of an allergenic medication. \\
      & Diagnoses \& Patient Reports & $5$
      & Recorded diagnoses ($3$), the date of a reported numbness event ($1$), and a reported perceptual change ($1$). \\
    \Xhline{0.5pt}
    \cplan\newline ($28$ tasks)
      & Trial Eligibility & $9$
      & Identify all eligible candidate trials from a patient note and trial inclusion/exclusion criteria. \\
      & Record Processing & $9$
      & Flag erroneous clinical-record rows ($8$), or configure and run MIMIC-IV-to-MEDS data-format conversion ($1$). \\
      & Tumor Localization & $10$
      & Select tumor-containing grid tiles on ten whole-slide pathology images and submit their coordinates. \\
    \Xhline{0.5pt}
    \cweb\newline ($24$ tasks)
      & Denial Resolution & $10$
      & Submit an appeal ($7$), escalate to a supervisor ($2$), or record a corrected-claim disposition ($1$), then document the resolution. \\
      & Referral Review & $10$
      & Verify and clear a referral ($4$), or identify a problem, withhold authorization, and keep the referral open ($6$). \\
      & Equipment Order Checks & $4$
      & Hold an equipment order for an expired evaluation, missing prescription, inactive insurance, or unsigned prescription ($1$ each), then withhold the fax and document the issue. \\
    \Xhline{1.2pt}
  \end{tabular}
  \vspace{3pt}
  \begin{minipage}{\linewidth}
    \footnotesize
    Counts refer to task instances, not unique patients or workflows.
    The eight record-quality tasks include four clue/no-clue pairs.
  \end{minipage}
\end{table}

\textbf{Examples and difficulty.} \Cref{tab:appendix-examples} gives one task per environment, paraphrased
from its prompt, and the best mean outcome any of the $25$ model--harness pairs reaches.

\begin{table}[h]
  \centering
  \caption{\small\textbf{One example per environment and the best outcome over all model--harness pairs.}}
  \label{tab:appendix-examples}
  \footnotesize
  \setlength{\tabcolsep}{4pt}
  \begin{tabular}{l>{\raggedright\arraybackslash}p{9.3cm}c}
    \Xhline{1.2pt}
    \textbf{Env.} & \textbf{Example task (paraphrased)} & \textbf{Best} \\
    \Xhline{0.5pt}
    \ctool & A patient with ulcerative colitis is admitted with frequent bloody stools and systemic symptoms. Decide which severity and risk calculators the question calls for, pull their inputs from the patient's record, compute them, and write every value to the answer file. & $0.16$ \\
    \cmem & Given $80$ consultation sessions in order, answer one question about a dose, lab value or diagnosis that was recorded once, dozens of sessions before the question. & $0.86$ \\
    \cplan & Given one admission note and a directory of trial documents, list every trial whose inclusion criteria the patient meets and none of whose exclusion criteria apply. Other tasks select the tumor region on a whole-slide image. & $0.10$ \\
    \cweb & For a no-authorization claim denial, review the denial, the remittance image and the patient inquiry in the EMR, check eligibility on the payer portal, carry out the right resolution there, and document it in a triage note. & $0.47$ \\
    \Xhline{1.2pt}
  \end{tabular}
\end{table}

\FloatBarrier

\section{Scoring and task diagnostics}
\label{sec:appendix-grading}

Every outcome comes from a deterministic oracle (\Cref{sec:scoring}). For each environment,
\Cref{tab:appendix-scoring} gives the outcome rule, where the answer is read and the task diagnostic
reported as Proc.

\begin{table}[h]
  \centering
  \caption{\small\textbf{Scoring rules.}}
  \label{tab:appendix-scoring}
  \footnotesize
  \setlength{\tabcolsep}{3pt}
  \begin{tabular}{l>{\raggedright\arraybackslash}p{5.5cm}>{\raggedright\arraybackslash}p{2.5cm}>{\raggedright\arraybackslash}p{3.2cm}}
    \Xhline{1.2pt}
    \textbf{Env.} & \textbf{Outcome} & \textbf{Answer read from} & \textbf{Diagnostic (Proc.)} \\
    \Xhline{0.5pt}
    \ctool & calculator-selection F1 $\times$ share of required outputs correct, with omitted outputs counted wrong. Numbers match within $\max(0.1, 0.02|v|)$ of the reference $v$, and categories after normalization & answer file, else final reply & calculator-selection F1 \\
    \cmem & $1$ if the normalized answer matches an accepted alias & answer file, else final reply & LLM judge on the condensed trace \\
    \cplan & $1$ if the task's own tests pass & output files & reference-set recall ($17$~tasks) \\
    \cweb & $1$ if every outcome check on the final state passes & application state & share of process checks passed \\
    \Xhline{1.2pt}
  \end{tabular}
\end{table}

\textbf{Answer recovery.} When the answer file is missing or empty, we parse the final reply with the
task's own scorer, which needs a parseable calculator submission on \ctool or an explicit answer line
on \cmem. A recovered answer counts as \emph{stated}, and an episode with neither a file answer nor a
recoverable reply counts as \emph{absent}. Scoring the file alone would change the harness rankings
(\Cref{sec:appendix-execution}).

\textbf{LLM judge.} The \texttt{gpt-4o-mini} judge scores the process on \cmem, which has no
intermediate checks. It reads a condensed trace rebuilt from the proxy logs: repeated history is
removed, each round's new messages and tool calls are kept, tool calls are read from both the OpenAI
and Anthropic request formats, and long message bodies are shortened. Its score averages three
ratings in $[0,1]$: appropriate tool use, coherent and evidence-grounded actions, and recovery from
tool failures.

\FloatBarrier
\section{Harness configuration and controls}
\label{sec:appendix-harness}

All five harnesses reach the same locally served models through one proxy, which records requests
and responses. \Cref{tab:appendix-harness} lists their versions and the few defaults we changed so
that each harness can complete the tasks.

\begin{table}[h]
  \centering
  \caption{\small\textbf{Harness versions and the defaults we changed.} Versions are those installed at the end of
  the study, except Claude Code, which updates itself and ran one version within each model block.}
  \label{tab:appendix-harness}
  \footnotesize
  \setlength{\tabcolsep}{3pt}
  \begin{tabular}{ll>{\raggedright\arraybackslash}p{8.6cm}}
    \Xhline{1.2pt}
    \textbf{Harness} & \textbf{Version} & \textbf{Default changed, and why} \\
    \Xhline{0.5pt}
    OpenClaw & 2026.9.2 & output cap $16$K $\to$ $65{,}536$ (truncated tool calls on $15$ of $32$ \ctool tasks), clock $900$\,s $\to$ $3{,}600$\,s \\
    Hermes & 0.21.0 & none \\
    ZeroClaw & 0.8.5 & tool-iteration limit $\approx 11$ $\to$ $200$ (past it, the tool list is dropped) \\
    Codex CLI & 0.156.1 & API-key variable supplied, MCP tools auto-approved (\cweb) \\
    Claude Code & 2.1.267--2.1.270 & none (the proxy translates its Anthropic-format traffic) \\
    \Xhline{1.2pt}
  \end{tabular}
\end{table}

\textbf{Serving settings.} vLLM serves \mgemma, \mqwen and \mlightning with a $262{,}144$-token context
window, and \mgptoss and \mglm with $131{,}072$ tokens. On supported request paths the proxy fills in
sampling fields a harness leaves out, with temperature $0.8$ for \mqwen and $1.0$ for the other models
and top-$p$ $0.95$, and it keeps every value a harness sends. In the four model blocks after \mqwen,
the proxy also caps output at $65{,}536$ tokens for \mgemma and \mlightning and $16{,}384$ for \mgptoss
and \mglm, keeping any lower harness limit, and clients time out after $1{,}800$\,s (a stream-idle
timeout for Codex) under a $3{,}600$\,s proxy timeout. In the \mqwen block, output limits come from
the harnesses alone.

\textbf{Isolation and accounting.} Each episode runs once per seed in its own namespace sandbox, with a
private temporary directory, a memory cap and a one-hour deadline, and it cannot see graders, gold
answers or other episodes. Each isolated \cweb episode starts
from a fresh browser context, and no episode is rerun after its score is seen. Before a harness entered
the study, we traced the network connections of one episode to confirm that all model traffic reached
the local proxy. This check excluded one candidate harness, Moltis, which answered with an off-machine
model through credentials it found on the host.

\FloatBarrier
\section{Details of the model--harness decomposition}
\label{sec:appendix-variance}

Within one environment, $\bar y_{mh\sigma}$ is the mean outcome over tasks for model $m$, harness $h$
and seed $\sigma$. A dot denotes an average over that index. The terms in \Cref{eq:anova} are
\begin{align}
\mu &= \bar y_{\cdot\cdot\cdot}, \nonumber\\
\alpha_m &= \bar y_{m\cdot\cdot} - \mu, \qquad
\beta_h = \bar y_{\cdot h\cdot} - \mu, \nonumber\\
\gamma_{mh} &= \bar y_{mh\cdot} - \bar y_{m\cdot\cdot} - \bar y_{\cdot h\cdot} + \mu, \nonumber\\
\varepsilon_{mh\sigma} &= \bar y_{mh\sigma} - \bar y_{mh\cdot}.
\label{eq:anova-terms}
\end{align}
These definitions give $\sum_m \alpha_m = \sum_h \beta_h = 0$,
$\sum_m \gamma_{mh} = 0$ for each $h$, $\sum_h \gamma_{mh} = 0$ for each $m$, and
$\sum_\sigma \varepsilon_{mh\sigma} = 0$ for each pair $(m,h)$.

Writing $M = |\mathcal{M}|$, $H = |\mathcal{H}|$ and $S = |\mathcal{S}|$, the sums of squares are
\begin{align}
\mathrm{SS}_M &= HS\sum_m \alpha_m^2,
& \mathrm{SS}_H &= MS\sum_h \beta_h^2, \nonumber\\
\mathrm{SS}_{MH} &= S\sum_{m,h} \gamma_{mh}^2,
& \mathrm{SS}_E &= \sum_{m,h,\sigma} \varepsilon_{mh\sigma}^2,
\label{eq:ss-components}\\
\mathrm{SS}_T &= \sum_{m,h,\sigma}(\bar y_{mh\sigma} - \mu)^2.
\label{eq:ss-total}
\end{align}
Expanding $\bar y_{mh\sigma} - \mu = \alpha_m + \beta_h + \gamma_{mh} + \varepsilon_{mh\sigma}$,
all cross terms vanish by the zero-sum identities above, for example
$\sum_{m,h,\sigma}\alpha_m\beta_h = S\,(\sum_m\alpha_m)(\sum_h\beta_h) = 0$. This proves \Cref{eq:ss}.
For $\mathrm{SS}_T > 0$, dividing by $\mathrm{SS}_T$ gives the four shares.
We compute these shares separately within each environment and report their arithmetic mean
across the four environments.

\begin{proof}[Proof of \Cref{prop:hvmv}]
Using population variances, a cell at a fixed model deviates from the model mean by
$\bar y_{mh\cdot} - \bar y_{m\cdot\cdot} = \beta_h + \gamma_{mh}$. Hence
\begin{align*}
\overline{\mathrm{HV}}
&= \frac{1}{M}\sum_m \frac{1}{H}\sum_h (\beta_h + \gamma_{mh})^2\\
&= \frac{1}{H}\sum_h \beta_h^2 + \frac{1}{MH}\sum_{m,h}\gamma_{mh}^2
 = \frac{\mathrm{SS}_H + \mathrm{SS}_{MH}}{N}.
\end{align*}
The cross term is zero because $\sum_m\gamma_{mh}=0$ for every $h$.
Exchanging models and harnesses gives
$\overline{\mathrm{MV}} = (\mathrm{SS}_M + \mathrm{SS}_{MH})/N$.
Subtracting the two expressions proves the comparison in \Cref{prop:hvmv}.
\end{proof}

\textbf{Values by environment.} \Cref{tab:appendix-variance} gives the shares behind
\Cref{fig:harness-comparison}a and the HV/MV ratio of \Cref{prop:hvmv}. With all five models the ratio
stays below one in every environment, so changing the model moves scores more than changing the
harness. Among the three main models, \cmem is the only environment where it exceeds one. For
comparison, \citet{zhang2026disclosingharness} report $7.8$ on a SWE-bench subset with three frontier
models and three configurations of one harness, so the ratio depends on which models, harnesses and
tasks are compared.

\begin{table}[h]
  \centering
  \caption{\small\textbf{Variance shares and HV/MV by environment.} Shares of the outcome variance
  over all five models, and the HV/MV ratio over all five and over the three main models.}
  \label{tab:appendix-variance}
  \footnotesize
  \setlength{\tabcolsep}{6pt}
  \begin{tabular}{lccccc}
    \Xhline{1.2pt}
    & \ctool & \cmem & \cplan & \cweb & \textbf{Mean} \\
    \Xhline{0.5pt}
    Model & $73.2\%$ & $47.4\%$ & $36.4\%$ & $73.2\%$ & $57.5\%$ \\
    Harness & $5.3\%$ & $25.4\%$ & $9.6\%$ & $9.5\%$ & $12.4\%$ \\
    Interaction & $9.5\%$ & $14.4\%$ & $14.1\%$ & $9.4\%$ & $11.9\%$ \\
    Run-to-run & $12.0\%$ & $12.8\%$ & $39.8\%$ & $8.0\%$ & $18.2\%$ \\
    \Xhline{0.5pt}
    HV/MV, five models & $0.18$ & $0.64$ & $0.47$ & $0.23$ & \\
    HV/MV, three main models & $0.82$ & $1.45$ & $0.45$ & $0.19$ & \\
    \Xhline{1.2pt}
  \end{tabular}
\end{table}

\FloatBarrier
\section{Additional results}
\label{sec:appendix-models}

\subsection{Performance, rankings and token use}
\label{sec:appendix-performance}

\textbf{Two further models.} \Cref{tab:appendix-results} repeats the main comparison for \mlightning
and \mglm. \mlightning scores near zero on \cplan and \cweb under every harness. \mglm is the only
model under which ZeroClaw is not last overall, and its \cplan outcomes stay near zero: in $92$ of its
$150$ pathology episodes the agent computed a tissue statistic with a threshold that matched almost no
pixel of the H\&E stain and reported the slide as tumor-free.

\begin{table*}[t]
  \centering
  \caption{\small\textbf{Outcome and process under a fixed model.} The two further models. Only the harness varies within a block. Mean $\pm$ SD over seeds. \textbf{Proc.}: calculator selection F1 (\ctool), recall of the reference set on the $17$ \cplan tasks that have one, share of process checks passed (\cweb). $^\dagger$\cmem records only the answer, so its Proc.\ is the LLM judge's score (\cref{sec:appendix-grading}). \textbf{MTok}: million tokens per episode. \emph{Rank}: mean outcome rank within a block. \textbf{Bold} and \underline{underlined}: best and second best Outc.\ and Proc.\ per capability within a block. Rows are ordered by mean rank over all five models, worst first.}
  \label{tab:appendix-results}
  \setlength{\tabcolsep}{3pt}
  \renewcommand{\arraystretch}{1.12}
  \resizebox{\textwidth}{!}{%
  \begin{tabular}{lI cccI cccI cccI cccI c}
    \Xhline{1.2pt}
    \multirow{2}{*}{\textbf{Harness}}
      & \multicolumn{3}{cI}{\textbf{\ctool}}
      & \multicolumn{3}{cI}{\textbf{\cmem}}
      & \multicolumn{3}{cI}{\textbf{\cplan}}
      & \multicolumn{3}{cI}{\textbf{\cweb}}
      & \multirow{2}{*}{\textbf{Rank}} \\
      & \textbf{Outc.}$\uparrow$  & \textbf{Proc.}$\uparrow$  & \textbf{MTok}$\downarrow$  & \textbf{Outc.}$\uparrow$  & \textbf{Proc.}$\uparrow$$^\dagger$  & \textbf{MTok}$\downarrow$  & \textbf{Outc.}$\uparrow$  & \textbf{Proc.}$\uparrow$  & \textbf{MTok}$\downarrow$  & \textbf{Outc.}$\uparrow$  & \textbf{Proc.}$\uparrow$  & \textbf{MTok}$\downarrow$ & \\
    \Xhline{0.5pt}
    \rowcolor{mhBlock!70}
    \multicolumn{14}{c}{\color{mhNavy}\textbf{\mlightning}} \\
    \baseline{ZeroClaw}      & \meanstd{0.05}{0.02} & \meanstd{0.23}{0.04} & 0.53      & \meanstd{0.55}{0.09} & \meanstd{0.20}{0.07} & 0.88      & \meanstd{0.01}{0.02} & \meanstd{0.17}{0.10} & 3.23      & \meanstd{0.00}{0.00} & \meanstd{0.08}{0.02} & 9.26      & 4.50 \\
    \baseline{OpenClaw}      & \meanstd{0.06}{0.01} & \meanstd{0.28}{0.07} & 0.65      & \meanstd{0.59}{0.05} & \meanstd{0.24}{0.05} & 1.35      & \meanstd{0.00}{0.00} & \meanstd{0.15}{0.01} & 5.00      & \meanstd{0.00}{0.00} & \meanstd{0.08}{0.02} & 12.22      & 3.75 \\
    \baseline{Hermes}      & \secondresult{\meanstd{0.07}{0.00}} & \firstresult{\meanstd{0.36}{0.02}} & 0.23      & \meanstd{0.62}{0.05} & \secondresult{\meanstd{0.31}{0.05}} & 0.85      & \firstresult{\meanstd{0.02}{0.04}} & \firstresult{\meanstd{0.40}{0.08}} & 1.87      & \meanstd{0.00}{0.00} & \meanstd{0.07}{0.03} & 11.54      & 2.50 \\
    \baseline{Codex}      & \firstresult{\meanstd{0.08}{0.01}} & \secondresult{\meanstd{0.33}{0.03}} & 0.60      & \secondresult{\meanstd{0.68}{0.05}} & \firstresult{\meanstd{0.41}{0.05}} & 0.72      & \secondresult{\meanstd{0.02}{0.02}} & \secondresult{\meanstd{0.25}{0.09}} & 2.95      & \firstresult{\meanstd{0.04}{0.04}} & \secondresult{\meanstd{0.08}{0.01}} & 8.01      & 1.50 \\
    \baseline{Claude Code}      & \meanstd{0.06}{0.01} & \meanstd{0.29}{0.03} & 0.35      & \firstresult{\meanstd{0.71}{0.05}} & \meanstd{0.29}{0.05} & 1.20      & \meanstd{0.00}{0.00} & \meanstd{0.22}{0.09} & 3.53      & \secondresult{\meanstd{0.01}{0.02}} & \firstresult{\meanstd{0.12}{0.05}} & 9.76      & 2.75 \\
    \Xhline{0.5pt}
    \rowcolor{mhBlock!70}
    \multicolumn{14}{c}{\color{mhNavy}\textbf{\mglm}} \\
    \baseline{ZeroClaw}      & \meanstd{0.03}{0.01} & \secondresult{\meanstd{0.31}{0.00}} & 0.41      & \secondresult{\meanstd{0.48}{0.12}} & \secondresult{\meanstd{0.44}{0.05}} & 0.23      & \meanstd{0.00}{0.00} & \meanstd{0.10}{0.06} & 1.55      & \meanstd{0.15}{0.10} & \meanstd{0.31}{0.06} & 2.65      & 3.25 \\
    \baseline{OpenClaw}      & \meanstd{0.02}{0.00} & \meanstd{0.29}{0.02} & 0.58      & \meanstd{0.43}{0.04} & \meanstd{0.36}{0.02} & 0.59      & \meanstd{0.00}{0.00} & \meanstd{0.11}{0.04} & 1.63      & \meanstd{0.00}{0.00} & \meanstd{0.27}{0.06} & 3.71      & 4.25 \\
    \baseline{Hermes}      & \firstresult{\meanstd{0.04}{0.02}} & \firstresult{\meanstd{0.32}{0.04}} & 0.41      & \meanstd{0.35}{0.08} & \firstresult{\meanstd{0.48}{0.06}} & 0.28      & \secondresult{\meanstd{0.01}{0.02}} & \firstresult{\meanstd{0.33}{0.04}} & 1.03      & \meanstd{0.06}{0.05} & \meanstd{0.14}{0.03} & 1.10      & 3.00 \\
    \baseline{Codex}      & \secondresult{\meanstd{0.04}{0.00}} & \meanstd{0.28}{0.03} & 0.61      & \firstresult{\meanstd{0.54}{0.05}} & \meanstd{0.38}{0.09} & 1.55      & \meanstd{0.01}{0.02} & \secondresult{\meanstd{0.28}{0.06}} & 2.64      & \secondresult{\meanstd{0.21}{0.15}} & \firstresult{\meanstd{0.34}{0.11}} & 8.30      & 2.00 \\
    \baseline{Claude Code}      & \meanstd{0.02}{0.00} & \meanstd{0.28}{0.03} & 0.87      & \meanstd{0.48}{0.08} & \meanstd{0.34}{0.08} & 1.28      & \firstresult{\meanstd{0.06}{0.05}} & \meanstd{0.23}{0.14} & 2.80      & \firstresult{\meanstd{0.25}{0.04}} & \secondresult{\meanstd{0.32}{0.04}} & 4.40      & 2.50 \\
    \Xhline{1.2pt}
  \end{tabular}}
\end{table*}

\textbf{Ranking stability.} \Cref{tab:ranking-stability} gives the leader, its probability of ranking
first and its gap to the runner-up for every model and environment.

\begin{table*}[t]
  \centering
  \caption{\small\textbf{Ranking stability.} Leader per model and environment, the bootstrap probability $P$ that it ranks first and its gap to the runner-up with a 95\% interval (paired bootstrap over tasks, $20{,}000$ resamples). Bold marks an interval that excludes zero.}
  \label{tab:ranking-stability}
  \small
  \setlength{\tabcolsep}{4pt}
  \resizebox{\textwidth}{!}{%
  \begin{tabular}{lI lcc I lcc}
    \Xhline{1.2pt}
    \multirow{2}{*}{\textbf{Model}} & \multicolumn{3}{cI}{\textbf{\ctool}} & \multicolumn{3}{c}{\textbf{\cmem}} \\
    & Leader & $P$ & Gap [95\% CI] & Leader & $P$ & Gap [95\% CI] \\
    \Xhline{0.5pt}
    \mgemma & Hermes & 0.95 & $+$0.04 [0.00, $+$0.09] & Claude Code & 0.61 & $+$0.01 [$-$0.07, $+$0.10] \\
    \mqwen & Hermes & 0.91 & $+$0.02 [$-$0.01, $+$0.04] & Codex & 0.47 & 0.00 [$-$0.10, $+$0.10] \\
    \mgptoss & Claude Code & 0.94 & $+$0.02 [0.00, $+$0.04] & Claude Code & 0.97 & \textbf{$+$0.12 [$+$0.03, $+$0.23]} \\
    \mlightning & Codex & 0.80 & $+$0.01 [$-$0.01, $+$0.03] & Claude Code & 0.68 & $+$0.03 [$-$0.06, $+$0.12] \\
    \mglm & Hermes & 0.59 & 0.00 [$-$0.01, $+$0.02] & Codex & 0.66 & $+$0.06 [$-$0.09, $+$0.20] \\
    \Xhline{1.2pt}
  \end{tabular}}
  \par\vspace{6pt}
  \resizebox{\textwidth}{!}{%
  \begin{tabular}{lI lcc I lcc}
    \Xhline{1.2pt}
    \multirow{2}{*}{\textbf{Model}} & \multicolumn{3}{cI}{\textbf{\cplan}} & \multicolumn{3}{c}{\textbf{\cweb}} \\
    & Leader & $P$ & Gap [95\% CI] & Leader & $P$ & Gap [95\% CI] \\
    \Xhline{0.5pt}
    \mgemma & Claude Code & 0.78 & $+$0.02 [$-$0.02, $+$0.08] & Codex & 0.61 & $+$0.01 [$-$0.07, $+$0.11] \\
    \mqwen & OpenClaw & 0.39 & 0.00 [$-$0.07, $+$0.07] & Claude Code & 0.59 & $+$0.01 [$-$0.10, $+$0.12] \\
    \mgptoss & Claude Code & 0.47 & $+$0.01 [$-$0.06, $+$0.08] & Claude Code & 1.00 & \textbf{$+$0.24 [$+$0.11, $+$0.38]} \\
    \mlightning & Hermes & 0.41 & 0.00 [$-$0.04, $+$0.04] & Codex & 0.84 & $+$0.03 [$-$0.03, $+$0.08] \\
    \mglm & Claude Code & 0.93 & $+$0.05 [$-$0.01, $+$0.12] & Claude Code & 0.76 & $+$0.04 [$-$0.06, $+$0.15] \\
    \Xhline{1.2pt}
  \end{tabular}}
\end{table*}

\textbf{Token use.} Token use per episode, which counts the history resent with every call, differs by
a median $2.6\times$ across harnesses within a model--environment pair. The highest-scoring harness
also uses the fewest tokens in only $3$ of $20$ pairs. Under \mqwen, Hermes reaches $0.78$ on \cmem
with $0.41$ million tokens per episode, while Codex reaches $0.86$ with $0.62$ million.

\subsection{Execution outcomes and scoring analyses}
\label{sec:appendix-execution}

\textbf{Episode endings.} \Cref{fig:task-diagnostics}a groups the episodes of \mqwen by how they end.
Categories are decided in a fixed order: explicit termination evidence first, then the state of the
answer or evaluated output, and an error earlier in a run that continues does not decide the category.
Positive score includes partial credit on \ctool. Scored zero is a scorable output or application state
that earns no credit. No valid output is a required output that is missing, unparseable or in the wrong
format. Timeout is the one-hour wall-clock limit. Limit or error covers context and request limits and
other terminal errors. Unresolved episodes lack the evidence to decide.

\textbf{Answer rate and score.} \Cref{tab:answer-rate} gives the values behind
\Cref{fig:harness-comparison}b, counting an answer wherever the agent gave it. Across the five models,
$15\%$ of \ctool episodes and $13\%$ of \cmem episodes end without an answer, while $96\%$ of \cplan
episodes produce output files. Because a harness that answers more may answer harder items, scores are
compared on the $n$ matched task--seed pairs that every harness answered. On these pairs, scores vary
little across harnesses except for \mgptoss and \mglm on \cmem, with ranges of $0.56$--$0.85$ and
$0.44$--$0.75$.

\textbf{Answer time.} The bottom rows of \Cref{tab:answer-rate} recount answers retrospectively at
earlier cutoffs, to show how the answer-rate gap depends on the one-hour limit. Each answer's time is
the last write of the answer file in the saved trajectory, or the episode's end for reply-only answers
and files without a usable timestamp. With more time, the \ctool answer-rate spread narrows from $0.49$
to $0.29$, while the outcome spreads stay at $0.04$--$0.05$ on \ctool and $0.24$--$0.25$ on \cmem.

\begin{table}[t]
  \centering
  \caption{\small\textbf{Answer rate and score.} Top: minimum--maximum across harnesses of the answer
  rate over all episodes and mean score on the $n$ matched task--seed pairs every harness answered.
  Score is the task outcome (\Cref{sec:scoring}). Bottom: maximum-minus-minimum spreads over
  all episodes, averaged over five models, at retrospective answer-time cutoffs $T$ with a
  $30$-second clock tolerance. Recorded answers after the cutoff count as missing and score zero.
  Answers given only in the reply count.}
  \label{tab:answer-rate}
  \footnotesize
  \setlength{\tabcolsep}{4pt}
  \begin{tabular}{lI ccrI ccr}
    \Xhline{1.2pt}
    & \multicolumn{3}{cI}{\textbf{\ctool}} & \multicolumn{3}{c}{\textbf{\cmem}} \\
    \textbf{Model} & Answer rate & Score (shared) & $n$ & Answer rate & Score (shared) & $n$ \\
    \Xhline{0.5pt}
    \mgemma & 0.50--0.95 & 0.12--0.16 & 28 & 0.57--0.91 & 0.87--0.90 & 30 \\
    \mqwen & 0.46--1.00 & 0.10--0.16 & 39 & 0.81--1.00 & 0.82--0.88 & 56 \\
    \mgptoss & 0.80--0.98 & 0.13--0.17 & 57 & 0.59--1.00 & 0.56--0.85 & 27 \\
    \mlightning & 0.59--0.81 & 0.08--0.12 & 28 & 0.77--0.99 & 0.69--0.73 & 45 \\
    \mglm & 0.91--0.98 & 0.02--0.04 & 72 & 0.80--0.96 & 0.44--0.75 & 32 \\
    \Xhline{0.5pt}
    \textbf{Cutoff} & Rate spread & Outcome spread & & Rate spread & Outcome spread & \\
    \Xhline{0.5pt}
    $T=15$ min & 0.49 & 0.04 &  & 0.28 & 0.25 &  \\
    $T=30$ min & 0.42 & 0.05 &  & 0.27 & 0.24 &  \\
    $T=45$ min & 0.37 & 0.05 &  & 0.27 & 0.25 &  \\
    $T=60$ min & 0.29 & 0.04 &  & 0.26 & 0.25 &  \\
    \Xhline{1.2pt}
  \end{tabular}
\end{table}

\textbf{File-only scoring.} On \ctool and \cmem, $14\%$ and $17\%$ of answers appear only in the final
reply (\Cref{fig:delivery}). Scoring the answer file alone would lower the \ctool scores of \mgemma
from $0.06$--$0.13$ to $0.00$--$0.05$ and the \cmem scores of \mlightning from $0.55$--$0.71$ to
$0.20$--$0.43$. It would also change the harness rankings, with Kendall $\tau$ as low as $0.40$ and a
different leader under \mgemma on \cmem.

\begin{figure}[!h]
  \centering
  \includegraphics[width=\textwidth]{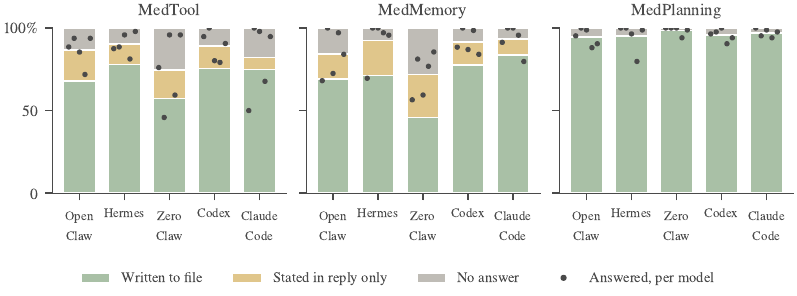}
  \caption{\small\textbf{Where the answer ends up.} Share of episodes whose answer was written to the
  named file, stated only in the final reply or absent, averaged over five models. Dots show each
  model's answer rate.}
  \label{fig:delivery}
\end{figure}

\FloatBarrier
\section{Splitting the harness variance into answer rate and answer score}
\label{sec:appendix-proofs}

When a task asks for an answer, an episode can end without one. For a harness $h$, with the model and
environment fixed, let $A = 1$ if an episode ends with an answer, wherever the agent gives it, and
$Y = 0$ if $A = 0$. The answer rate is $r_h = \Pr(A = 1 \mid h)$ and the mean score among answered episodes is
$q_h = \mathbb{E}[Y \mid A = 1, h]$, so that $\bar y_h = r_h q_h$. With $a_h = r_h - \bar r$ and
$b_h = q_h - \bar q$ the deviations from the means over harnesses, expanding
$\bar y_h = \bar r \bar q + \bar q a_h + \bar r b_h + a_h b_h$ gives
\begin{align*}
\operatorname{Var}_h \bar y_h &= V_r + V_q + R, \qquad V_r = \bar q^{2}\operatorname{Var}(a),
\qquad V_q = \bar r^{2}\operatorname{Var}(b),\\
R &= 2\bar r\bar q\operatorname{Cov}(a, b) + \operatorname{Var}(ab) + 2\bar q\operatorname{Cov}(a, ab)
  + 2\bar r\operatorname{Cov}(b, ab),
\end{align*}
with moments taken over harnesses. $V_r$ is the variance the answer rate alone would produce, and $V_q$
the variance the answer score alone would produce. The first term of $R$ vanishes when answer rate and
mean answer score are uncorrelated across harnesses, and the others are of third and fourth order in
the deviations. The covariance is negative when harnesses that answer more score lower on what they
answer, as when they answer harder items.

Over the $10$ model--environment pairs on \ctool and \cmem, the first-order answer-rate share
$V_r/(V_r + V_q)$ has a median of $61\%$ and is the larger in $7$ of the $10$ pairs. The remainder
$|R|$ is below a third of $\operatorname{Var}_h \bar y_h$ in $6$ of the $10$ pairs. It is larger where
the outcome variance is small (\ctool under \mqwen and \mgptoss) and where answer rate and mean answer
score move in opposite directions (\mlightning and \mglm on \cmem). For \mqwen on \ctool, $V_r + V_q$
is $3.05$ times $\operatorname{Var}_h \bar y_h$, so the negative remainder cancels about two thirds of
them. Because $q_h$ averages over the items $h$ answered,
\Cref{sec:delivery} compares scores on matched task--seed pairs every harness answered.

\clearpage
\section{\lgharness: implementation and ablation details}
\label{sec:appendix-mhloop}
\suppressfloats[t]

This section gives the configuration and comparison protocol behind \Cref{tab:final-switches}, the
intervals and task diagnostics of its outcome changes and the execution evidence that explains them.

\subsection{Implementation and comparison protocol}
\label{sec:appendix-mhloop-implementation}

\Cref{tab:mhlab-config} lists the configuration of the final matrix. Context management summarizes
older turns, as the product harnesses in \Cref{sec:appendix-modules} do. The input budget sits below the
native windows so that summarization fires on long trajectories. \mqwen writes long reasoning, so it gets
a larger window and output cap than \mgptoss while keeping the same input budget.

\begin{table}[h]
  \centering
  \caption{\small\textbf{\lgharness configuration of the final matrix.}}
  \label{tab:mhlab-config}
  \footnotesize
  \setlength{\tabcolsep}{4pt}
  \begin{tabular}{l>{\raggedright\arraybackslash}p{10.2cm}}
    \Xhline{1.2pt}
    \textbf{Setting} & \textbf{Value} \\
    \Xhline{0.5pt}
    Matrix & two models, four conditions, $107$ tasks and three seeds, for $2{,}568$ scheduled episodes \\
    Input budget & $65{,}536$ tokens per request for both models, with $5\%$ reserved for token-estimation error \\
    Window and output cap & \mqwen: $131$K window, $65$K output cap. \mgptoss: $80$K window, $16$K output cap \\
    File tools & seven workspace tools for reading, writing, editing, listing and searching files and running shell commands, with workspace confinement and read-before-write checks \\
    Browser tools & fourteen browser tools \\
    Execution limits & tool results capped at $24$K characters, loop detection, a $500$-step limit and a one-hour deadline \\
    Context management & summarization when estimated input reaches $80\%$ of the input budget after the $5\%$ reserve \\
    Planning & an \texttt{update\_plan} tool, planning instructions, reminders and the plan shown on every turn \\
    Tool exposure off & task tools reachable only through \texttt{tool\_search}, \texttt{tool\_describe} and \texttt{tool\_call}, and a tool must be discovered before it is called. The planning tool stays directly exposed \\
    \Xhline{1.2pt}
  \end{tabular}
\end{table}

\paragraph{Pairing and exclusions.} Each comparison pairs the same task and seed under \emph{full} and
the switched condition, and a pair is dropped when either episode is excluded. Excluded are provider
failures ($23$ \cweb, $2$ \cplan and $1$ \ctool episodes of \mgptoss whose responses vLLM could not parse,
and $1$ \cplan episode of \mqwen) and two \mqwen planning-off pathology episodes stopped by the
$200$\,GiB memory cap when a shell script loaded a whole slide, which left no result. Since some of
these failures stem from the model's own output or actions, we also count every excluded episode as a
failure. All statistically significant outcome effects in \Cref{tab:final-switches} then retain their direction
and significance. The largest
shifts are on \cweb for \mgptoss, where the three changes move from $-0.17$, $-0.18$ and $-0.15$ to
$-0.14$, $-0.17$ and $-0.12$.

\subsection{Outcome intervals and task diagnostics}
\label{sec:appendix-mhloop-results}

\Cref{tab:final-intervals} gives the 95\% intervals of the outcome changes in \Cref{tab:final-switches},
next to \cplan recall and the share of \cweb process checks passed. These two measures show one effect
the pass rates hide. Without context management, \cplan recall falls by $0.29$ for \mqwen and $0.16$ for
\mgptoss although the outcome barely moves, because the pass threshold is rarely met in any condition.
For the mechanism comparisons, we report environment-based task diagnostics.

\begin{table*}[t]
  \centering
  \caption{\small\textbf{Outcome and process changes behind \Cref{tab:final-switches}.} \emph{full}: mean $\pm$ SD over seeds. Other rows: paired change in outcome with a 95\% bootstrap interval over tasks, and paired change ($\pm$ SD of the per-seed change) in \cplan recall on the $17$ tasks with a reference set and in the share of \cweb process checks passed.}
  \label{tab:final-intervals}
  \small\setlength{\tabcolsep}{4pt}\renewcommand{\arraystretch}{1.1}
  \begin{tabularx}{\textwidth}{lI YI Y}
    \Xhline{1.2pt}
    \multirow{2}{*}{\textbf{Condition}} & \multicolumn{1}{cI}{\textbf{\ctool}} & \multicolumn{1}{c}{\textbf{\cmem}} \\
    & Outc.\ [95\% CI] & Outc.\ [95\% CI] \\
    \Xhline{0.5pt}
    \rowcolor{mhBlock!70}
    \multicolumn{3}{c}{\color{mhNavy}\textbf{\mqwen}} \\
    \emph{full} & \meanstd{0.10}{0.02} & \meanstd{0.88}{0.03} \\
    $-$ context management & $+$0.01 [$-$0.01, $+$0.04] & $-$0.12 [$-$0.23, $-$0.01] \\
    $-$ planning & $-$0.02 [$-$0.04, 0.00] & $-$0.04 [$-$0.12, $+$0.01] \\
    $-$ tool exposure & 0.00 [$-$0.02, $+$0.03] & $-$0.14 [$-$0.23, $-$0.07] \\
    \Xhline{0.5pt}
    \rowcolor{mhBlock!70}
    \multicolumn{3}{c}{\color{mhNavy}\textbf{\mgptoss}} \\
    \emph{full} & \meanstd{0.14}{0.03} & \meanstd{0.72}{0.03} \\
    $-$ context management & $-$0.01 [$-$0.04, $+$0.02] & $+$0.06 [$-$0.09, $+$0.20] \\
    $-$ planning & $-$0.02 [$-$0.04, $+$0.01] & $-$0.01 [$-$0.10, $+$0.09] \\
    $-$ tool exposure & $-$0.04 [$-$0.07, $-$0.02] & $-$0.06 [$-$0.14, $+$0.03] \\
    \Xhline{1.2pt}
  \end{tabularx}
  \par\vspace{6pt}
  \begin{tabularx}{\textwidth}{lI Y cI Y c}
    \Xhline{1.2pt}
    \multirow{2}{*}{\textbf{Condition}} & \multicolumn{2}{cI}{\textbf{\cplan}} & \multicolumn{2}{c}{\textbf{\cweb}} \\
    & Outc.\ [95\% CI] & Recall & Outc.\ [95\% CI] & Checks \\
    \Xhline{0.5pt}
    \rowcolor{mhBlock!70}
    \multicolumn{5}{c}{\color{mhNavy}\textbf{\mqwen}} \\
    \emph{full} & \meanstd{0.12}{0.05} & \meanstd{0.54}{0.03} & \meanstd{0.44}{0.02} & \meanstd{0.44}{0.07} \\
    $-$ context management & $-$0.04 [$-$0.12, $+$0.02] & \meanstd{$-$0.29}{0.01} & $-$0.39 [$-$0.57, $-$0.22] & \meanstd{$-$0.17}{0.11} \\
    $-$ planning & $+$0.01 [$-$0.07, $+$0.10] & \meanstd{$-$0.04}{0.11} & 0.00 [$-$0.08, $+$0.08] & \meanstd{$-$0.07}{0.08} \\
    $-$ tool exposure & $+$0.01 [$-$0.06, $+$0.07] & \meanstd{$-$0.12}{0.14} & 0.00 [$-$0.06, $+$0.07] & \meanstd{$-$0.02}{0.05} \\
    \Xhline{0.5pt}
    \rowcolor{mhBlock!70}
    \multicolumn{5}{c}{\color{mhNavy}\textbf{\mgptoss}} \\
    \emph{full} & \meanstd{0.06}{0.02} & \meanstd{0.40}{0.03} & \meanstd{0.18}{0.08} & \meanstd{0.33}{0.04} \\
    $-$ context management & $+$0.02 [$-$0.02, $+$0.07] & \meanstd{$-$0.16}{0.06} & $-$0.17 [$-$0.28, $-$0.07] & \meanstd{$-$0.18}{0.17} \\
    $-$ planning & $+$0.02 [$-$0.02, $+$0.07] & \meanstd{$-$0.08}{0.03} & $-$0.18 [$-$0.29, $-$0.08] & \meanstd{$-$0.19}{0.05} \\
    $-$ tool exposure & $-$0.04 [$-$0.10, 0.00] & \meanstd{$-$0.04}{0.06} & $-$0.15 [$-$0.26, $-$0.06] & \meanstd{$-$0.10}{0.05} \\
    \Xhline{1.2pt}
  \end{tabularx}
\end{table*}

\subsection{Execution evidence}
\label{sec:appendix-mhloop-evidence}

\Cref{tab:final-mechanisms} gives, for each mechanism, the execution measure behind its outcome change.

\paragraph{Context management.} Without summaries, traces outgrow the input budget. Overflows rise
from almost none to $67$ of $72$ \cweb episodes for \mqwen and $48$ of $65$ for \mgptoss, and to $57$ of
$84$ \cplan episodes for \mqwen.

\paragraph{Planning.} Without a plan, \mgptoss ends \cweb episodes much sooner, with median turns
falling from $176.5$ to $63$ on the paired episodes. \mqwen instead leaves more \ctool work unfinished,
with missing outputs rising from $24$ to $47$ of $96$ and timeouts from $22$ to $44$.

\paragraph{Tool exposure.} With tools behind search, models still name hidden tools directly, \mqwen in
$95$ of $96$ \ctool episodes and \mgptoss in $86$. Between $98\%$ and $100\%$ of these rejected calls
follow a successful search, so the models find the tool but then call it through the wrong interface.
The discovery gate itself (\texttt{tool\_call} before a search) fires at most once per episode.

\begin{table*}[t]
  \centering
  \caption{\small\textbf{Execution evidence behind \Cref{tab:final-switches}.} Each cell goes from \emph{full} to the switched condition. Counts are over the valid episodes of each condition, and median turns over the task--seed pairs valid in both. Where a condition lost episodes to exclusion, counts show their denominators. \cweb is scored on application state and has no file to submit. With tools hidden, the columns give the mean number of rejected direct calls per episode and the episodes with at least one.}
  \label{tab:final-mechanisms}
  \footnotesize\setlength{\tabcolsep}{3pt}
  \begin{tabular}{lI cI cccI cc}
    \Xhline{1.2pt}
    & \makecell{$-$ context\\management} & \multicolumn{3}{cI}{$-$ planning} & \multicolumn{2}{c}{$-$ tool exposure} \\
    \textbf{Environment} & Overflows & Median turns & \makecell{No valid\\submission} & Timeouts & \makecell{Rejected calls\\per episode} & \makecell{Episodes with\\a rejected call} \\
    \Xhline{0.5pt}
    \multicolumn{7}{l}{\mqwen} \\
    \ctool & 0 $\to$ 0 & 10 $\to$ 9 & 24 $\to$ 47 & 22 $\to$ 44 & 18 & 95/96 \\
    \cmem & 0 $\to$ 8 & 9 $\to$ 6 & 1 $\to$ 0 & 1 $\to$ 0 & 43 & 66/69 \\
    \cplan & 2 $\to$ 57 & 37 $\to$ 38 & 3/84 $\to$ 3/81 & 8/84 $\to$ 16/81 & 79 & 84/84 \\
    \cweb & 0 $\to$ 67 & 99.5 $\to$ 78.5 & -- & 3 $\to$ 1 & 31 & 47/72 \\
    \Xhline{0.5pt}
    \multicolumn{7}{l}{\mgptoss} \\
    \ctool & 0 $\to$ 1 & 18 $\to$ 10 & 6/96 $\to$ 10/95 & 0/96 $\to$ 0/95 & 12 & 86/96 \\
    \cmem & 0 $\to$ 4 & 15 $\to$ 16 & 0 $\to$ 5 & 0 $\to$ 0 & 6 & 57/69 \\
    \cplan & 0/84 $\to$ 22/83 & 31 $\to$ 29 & 1/84 $\to$ 0/83 & 1/84 $\to$ 0/83 & 34 & 82/84 \\
    \cweb & 0/66 $\to$ 48/65 & 176.5 $\to$ 63 & -- & 6/66 $\to$ 1/66 & 64 & 68/68 \\
    \Xhline{1.2pt}
  \end{tabular}
\end{table*}

\FloatBarrier
\section{Module activation and switches on the product harnesses}
\label{sec:appendix-modules}

\textbf{Module activation.} Before a mechanism inside a product harness can be ablated, it has to fire.
We therefore counted how often each advertised module was used, in the archived episodes of the main
study and of a pilot with Deep Agents, LangChain's agent harness. Hermes, OpenClaw and Claude Code expose skill tools in every
episode, and none of the $321$ episodes per model called one. Delegation was invoked in $0$--$5$ episodes
per model, for instance in $4$ of $32$ \ctool episodes by Deep Agents with \mqwen. Hermes's loop guards
intervened in no episode, and neither Hermes nor OpenClaw forwards its reasoning-effort setting to a
custom vLLM provider. Summarization fires in $2$--$5\%$ of episodes at the native windows.

\textbf{Detectability.} A mechanism that fires this rarely cannot be detected by switching it off. With
shared sampling randomness, an episode in which it never fires ends the same way with or without it, so
for a firing rate $p$ the paired effect is at most $p$, and at most $\sqrt{p/(1-p)}$ standard deviations
of the paired difference. With $69$--$96$ paired episodes per environment, the expected $z$-statistic
of switching summarization off is then at most $1.19$ ($p = 0.02$, $n = 69$) to $2.25$ ($p = 0.05$,
$n = 96$), even if every firing turned a success into a failure. Our serving stack does not repeat a
trajectory exactly under a fixed seed, which only makes such a mechanism harder to detect. The ablation
in \Cref{sec:components} therefore uses an input budget at which context management fires. The switches
in \Cref{tab:appendix-switches} were run where each mechanism is reachable, as paired comparisons on the
same tasks with every other setting held fixed.

\begin{table}[h]
  \centering
  \caption{\small\textbf{Paired switches on the product harnesses.} Outcome and context overflows go from
  the default to the switched setting. \emph{Pairs} counts paired episodes, over three seeds where it is $69$
  and over one seed otherwise. \emph{Tool search} is
  Hermes's \texttt{tool\_search}/\texttt{tool\_call} layer. With it off, \mgptoss browses in $24/24$ \cweb
  episodes instead of $3/24$. The \emph{submission check} adds one reminder round when the answer file is
  missing.}
  \label{tab:appendix-switches}
  \footnotesize
  \setlength{\tabcolsep}{3pt}
  \resizebox{\textwidth}{!}{%
  \begin{tabular}{llllcc}
    \Xhline{1.2pt}
    \textbf{Module} & \textbf{Harness $\times$ model} & \textbf{Environment, window} & \textbf{Outcome} & \textbf{Ovf.} & \textbf{Pairs} \\
    \Xhline{0.5pt}
    Summarization & OpenClaw $\times$ \mqwen & \cmem, 80K & $0.62 \to 0.44$ & $8 \to 25$ & 69 \\
    Summarization & Deep Agents $\times$ \mqwen & \cmem, 80K & $0.52 \to 0.38$ & $0 \to 10$ & 69 \\
    Summarization & Deep Agents $\times$ \mqwen & \cmem, 64K & $0.54 \to 0.45$ & $0 \to 15$ & 69 \\
    Summarization & Hermes $\times$ \mqwen & \cmem, 80K & $0.83 \to 0.83$ & $0 \to 0$ & 23 \\
    Summarization & Deep Agents $\times$ \mgptoss & \cmem, 80K & $0.52 \to 0.48$ & $0 \to 4$ & 69 \\
    Tool-output truncation & Hermes $\times$ \mqwen & \cmem, native & $0.78 \to 0.78$ & -- & 23 \\
    Tool-output truncation & Hermes $\times$ \mqwen & \cplan, native & $0.11 \to 0.06$ & -- & 18 \\
    Tool-output truncation & Hermes $\times$ \mqwen & \ctool, native & $0.16 \to 0.14$ & -- & 32 \\
    Tool search (off) & Hermes $\times$ \mqwen & \cweb, native & $0.17 \to 0.50$ & -- & 12 \\
    Tool search (off) & Hermes $\times$ \mgptoss & \cweb, native & $0.00 \to 0.00$ & -- & 24 \\
    Submission check (on) & ZeroClaw $\times$ \mqwen & \cmem, native & $0.42 \to 0.51$ & -- & 69 \\
    Submission check (on) & Hermes $\times$ \mgptoss & \cmem, native & $0.65 \to 0.52$ & -- & 23 \\
    \Xhline{1.2pt}
  \end{tabular}}
\end{table}

\textbf{Switches on the product harnesses.} The results match RQ3 (\Cref{sec:components}). Summarization
matters where a trace would otherwise outgrow the window and the harness has no other fallback: OpenClaw
and Deep Agents end such an episode on the provider's overflow error, Hermes trims the history and
continues, and \mgptoss rarely reaches the window at all. Tool search decides whether the browser tools
are visible. With it off, \mgptoss browses in every episode but still completes none, and \mqwen rises
from $0.17$ to $0.50$. The submission check turns a
finished episode without a file into a filed answer only when the model already had an answer.
\mqwen's stated answers are right, so its score rises. The four answers \mgptoss filed after a reminder
are all wrong, so the check changes none of its scores, and its drop from $0.65$ to $0.52$ comes from
episodes that already differed before any reminder. Truncating tool outputs, which Hermes does in nearly
every episode, has no clear effect.

\end{document}